\providecommand\IfPDFManagementActiveTF{\@secondoftwo}
\providecommand\IfPDFManagementActiveT{\@gobble}
\providecommand\IfPDFManagementActiveF{\@firstofone}
\crefname{item}{RQ}{RQ}
\pgfplotsset{compat=1.18}
\providecommand{\keywords}[1]{\par\medskip\noindent\textbf{Keywords:} #1}
\newcommand{\real}{\mathbb{R}}
\newcommand{\agent}{\protect$\triangle$}
\newcommand{\prop}{\mathrm{AP}}
\newcommand{\aut}{\mathcal{A}}
\newcommand{\xup}[2][]{\mathrel{\rotatebox[origin=c]{90}{$\xrightarrow[#1]{#2}$}}}
\newcommand{\xdown}[2][]{\mathrel{\rotatebox[origin=c]{-90}{$\xrightarrow[#1]{#2}$}}}
\newcommand{\xright}[2][]{\xrightarrow[#1]{#2}}
\newcommand{\mdp}{\mathcal{M}}
\newcommand{\region}{R}
\newcommand{\corner}{\textit{C}}
\newcommand{\regionset}{\mathcal{R}}
\newcommand{\cornerset}{\mathcal{C}}
\begin{document}

\title{About Time: Model-free Reinforcement Learning with Timed Reward Machines}
\author{
  Rajarshi Roy$^{1}$\footnote{This work was partly carried out while the author was at University of Liverpool, UK.},
  Anirban Majumdar$^{2}$,
  Ritam Raha$^{3}$,\\ 
  David Parker$^{1}$, and
  Marta Kwiatkowska$^{1}$
}

\affil{
$^{1}$Department of Computer Science, University of Oxford, UK \\
$^{2}$Tata Institute of Fundamental Research, Mumbai, India\\
$^{3}$Max Planck Institute for Software Systems, Kaiserslautern, Germany
}

\date{}

\maketitle

\begin{abstract}
Reward specification plays a central role in reinforcement learning (RL), guiding the agent’s behavior. 
To express non-Markovian rewards, formalisms such as reward machines have been introduced to capture dependencies on histories. 
However, traditional reward machines lack the ability to model precise timing constraints, limiting their use in time-sensitive applications. 
In this paper, we propose timed reward machines (TRMs), which are an extension of reward machines that incorporate timing constraints into the reward structure. TRMs enable more expressive specifications with tunable reward logic, for example, imposing costs for delays and granting rewards for timely actions.
We study model-free RL frameworks (i.e., tabular Q-learning) for learning optimal policies with TRMs under digital and real-time semantics.
Our algorithms integrate the TRM into learning via abstractions of timed automata, and employ counterfactual-imagining heuristics that exploit the structure of the TRM to improve the search.
Experimentally, we demonstrate that our algorithm learns policies that achieve high rewards while satisfying the timing constraints specified by the TRM on popular RL benchmarks. 
Moreover, we conduct comparative studies of performance under different TRM semantics, along with ablations that highlight the benefits of counterfactual-imagining.
\end{abstract}

\keywords{Timed Automata, Reinforcement Learning, Reward Machines}

\section{Introduction}
Reinforcement Learning (RL)~\cite{DBLP:books/lib/SuttonB2018} has become a foundational paradigm for sequential decision-making, enabling agents to learn optimal behavior through interactions with an environment. A crucial aspect of any RL problem is the reward specification, which defines the agent's learning objective. Traditionally, rewards are assumed to depend only on the current state and action, conforming to the \emph{Markov} property. However, many real-world tasks require objectives that depend on the history of states and actions, such as completing a sequence of tasks or avoiding repeated errors. To address this, \emph{non-Markovian} reward formalisms have been developed, with reward machines (RMs) emerging as a prominent approach.

Reward Machines~\cite{DBLP:journals/jair/IcarteKVM22} are finite-state automata that specify structured, history-dependent reward functions. They provide a compact and expressive way to encode high-level objectives and have been successfully integrated into RL frameworks, improving sample efficiency and interpretability. However, a critical limitation of existing RMs is their inability to express timing constraints---a vital requirement to specify time-sensitive requirements in domains such as robotics and autonomous driving~\cite{MEHDIPOUR2023110692,DBLP:journals/arobots/SadighLSSD18}. For instance, an AV might need to ``slow down for 3 seconds to allow a pedestrian to cross'' or ``avoid an unsafe road for at least 10 seconds''.

In this paper, we propose timed reward machines (TRMs) that enhance reward machines with fine-grained timing requirements. To this end, we augment reward machines with clocks that track elapsed time and use them to impose timing constraints, drawing inspiration from the timed automata (TA) literature~\cite{DBLP:journals/tcs/AlurD94}. TRMs thus allow reward functions to depend not only on the agent’s history of states and actions, but also on the time intervals between events. Moreover, TRMs can assign costs and rewards to both states and transitions, incentivizing the agent to complete a task in a timely manner while optimizing the overall reward.

\begin{example}
To illustrate our setting, we define a simple TRM (Fig.~\ref{fig:reward-machine}) on the standard Taxi domain (Fig.~\ref{fig:intro-taxi}). The TRM encourages a taxi agent to drive slowly, e.g., due to heavy traffic, by providing a higher reward when it delays at each step (enforced by a self-loop with timing constraint $x>1$). Additionally, it imposes a deadline for picking up the passenger (enforced by a transition with timing constraint $y\leq 14$). Finally, after pickup, the agent must reach the destination while driving slowly. Such time-sensitive objectives, involving delays and deadlines, can be naturally captured by TRMs.
\end{example}

\begin{figure*}
    \centering
    \begin{subfigure}{1\textwidth}
        \centering     \includegraphics[width=0.5\linewidth,height=0.5\textheight,keepaspectratio]{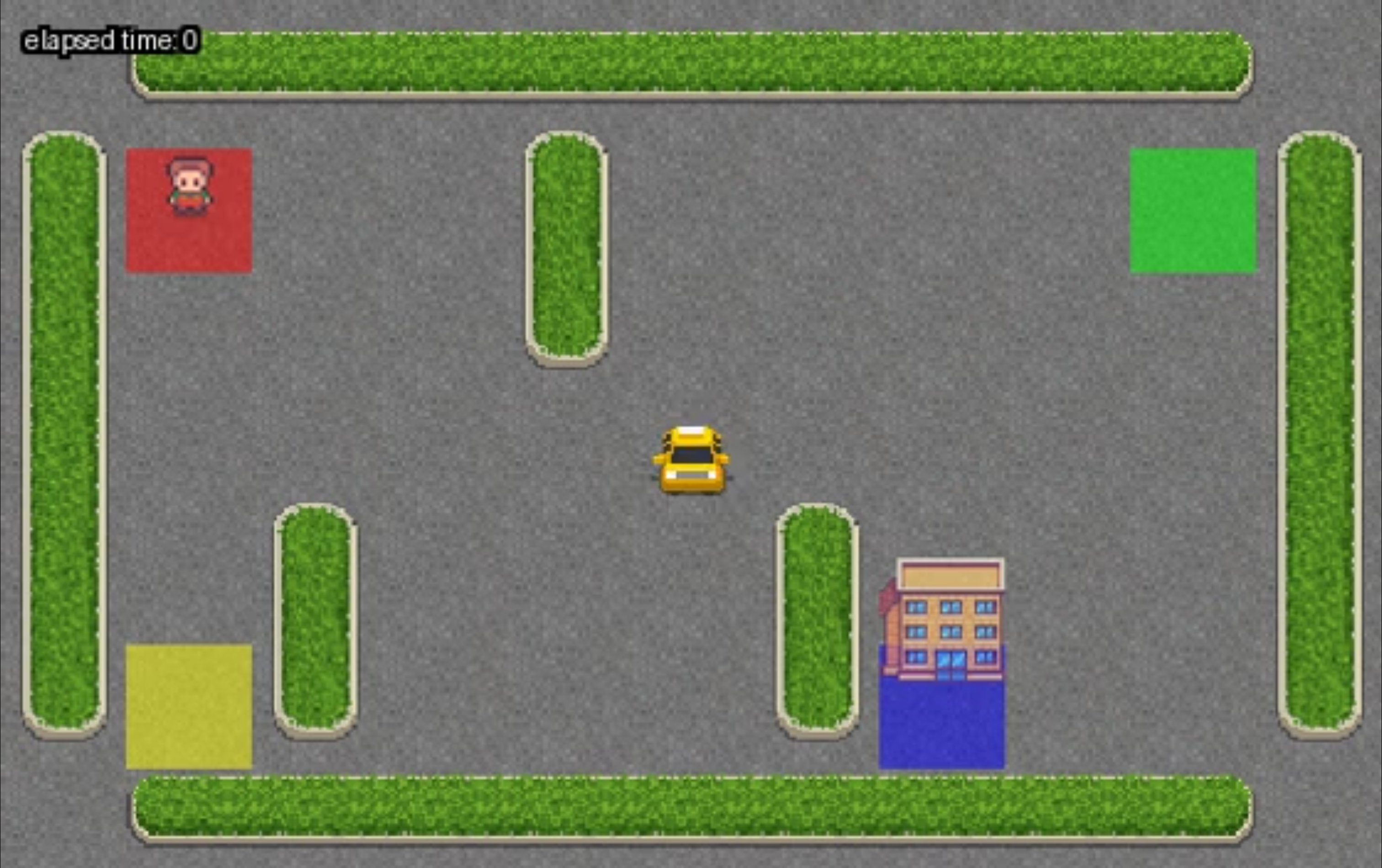}
        \caption{Taxi Domain}
        \label{fig:intro-taxi}
        \vspace{2mm}
    \end{subfigure}
    \hskip 0.5cm
    \begin{subfigure}{1\textwidth}
        \centering
        \begin{tikzpicture}[->, >=stealth, node distance=cm, on grid, auto, font=\scriptsize, initial text=]
    \node[state, initial, rectangle, rounded corners] (u0) at (0,0) {$u_0,-30$};
    \node[state,rectangle, rounded corners] (u1) at (4.5,0) {$u_1,-30$};
    \node[state,rectangle, rounded corners, fill=gray!10] (u2) at (7.5,0) {$u_2$};

    \path (u0) edge[loop above] node{$\{\},\ x>1,\{x\}, -10$} (u0)
          (u0) edge[loop below] node{$\{\},\ x\le 1,\{x\}, -50$} (u0)
          (u0) edge node{\texttt{pick\_pass},\,500,\,$y\leq 14$} (u1)
          (u1) edge[loop above] node{$\{\},\ x>1,\{x\}, -10$} (u1)
          (u1) edge[loop below] node{$\{\},\ x\le 1,\{x\}, -50$} (u1)
          (u1) edge node{\texttt{at\_dest},\,800} (u2);
  \end{tikzpicture}
        \caption{TRM example}
        \label{fig:reward-machine}
        \vspace{2mm}
    \end{subfigure}
    \\
    \vskip 0.5cm
    \begin{subfigure}{1\textwidth}
    \centering
    \begin{tikzpicture}
        \node[anchor=south west, inner sep=0] (img) at (0,0) {\includegraphics[width=\linewidth,height=0.27\textheight,keepaspectratio]{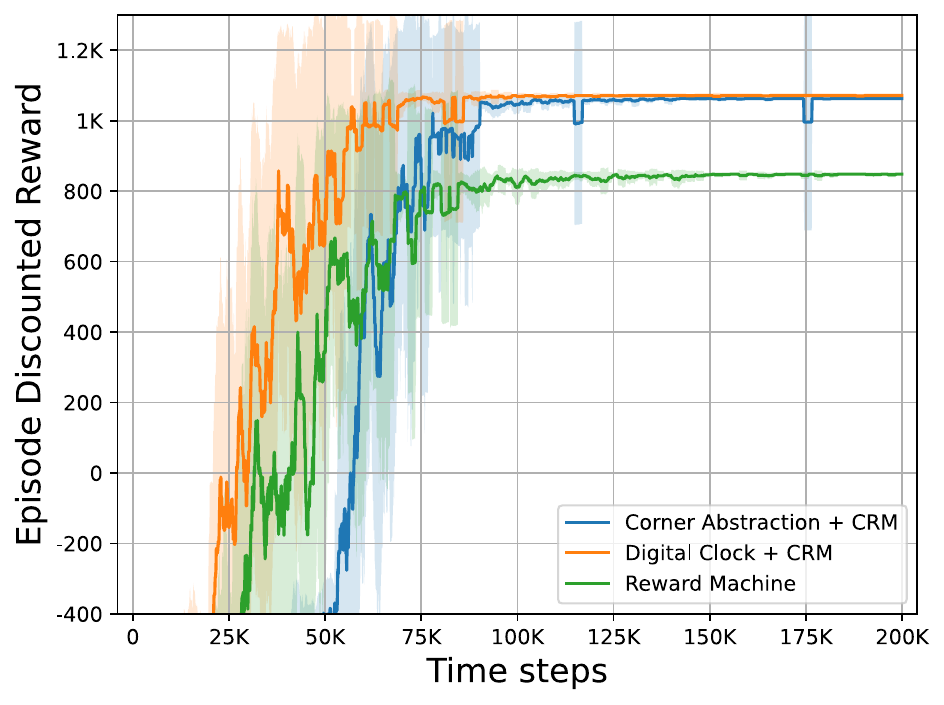}};
        \node[fill=white, opacity=1, text opacity=1, font=\fontsize{5}{8}\selectfont, align=center, inner sep=0pt] at (6.65,1.34) {Corner Abstraction+CI \\ Digital Clock+CI \\ Reward Machine};
    \end{tikzpicture}
    \caption{Reward comparison for presented RL algorithms}
    \label{fig:intro-graph}
\end{subfigure}
    
\caption{An illustration of TRM on Gym Taxi domain: (a) Taxi Domain example, with a passenger in location red and destination in location blue, (b) A TRM that instructs the taxi to pick up a passenger and drop her at a destination, while moving slowly, and (c) Rewards obtained using digital and real-time TRM, and reward machines.}~\label{fig:intro}
\end{figure*}

We interpret TRMs over Markov decision processes (MDPs) to model stochastic environments (Section~\ref{sec:problem}). To express timing constraints, we augment the MDP action set with explicit delay actions. We then study two standard timing interpretations---digital-clock (Section~\ref{sec:tabularQ-digital}) and real-time (Section~\ref{sec:tabularQ-cont})---also known as integer-clock and dense-time interpretations in TA literature~\cite{DBLP:conf/icalp/HenzingerMP92}. For each setting, we devise tabular Q-learning algorithms on product MDPs obtained by integrating the TRM in the environment.

In the digital-clock case, the product construction is straightforward: integer clock valuations are directly included in the MDP state space. In the real-time case, we consider two approaches: (i) discretized time for continuous delays, and (ii) a corner-point abstraction based on region construction of timed automata, which encourages agents to pick delays that are very close to integers.
%
We further enhance these algorithms with novel counterfactual imagining (CI), inspired by~\cite{DBLP:journals/jair/IcarteKVM22}, to incorporate alternative clock valuations and timing delays.
We also provide a detailed theoretical analysis of all the presented settings, including guarantees of convergence to ($\varepsilon$-)optimal policies.

Our algorithms based on TRMs, as opposed to standard reward machines (without delay actions), can enforce timing constraints and therefore obtain higher rewards that may depend on precise timing.
We demonstrate this in Fig.~\ref{fig:intro-graph} for our running example under both digital and real-time interpretations.
For detailed experiments, we interpret several TRMs over standard RL benchmarks (Section~\ref{sec:experiments}). Our experiments reveal clear differences between the digital and real-time settings: in scenarios requiring substantial delays, the corner-point abstraction often yields better returns. Moreover, we consistently observe higher returns from our CI heuristics.


\subsection{Related Work.} We contrast our work with existing approaches, presenting them in order of relevance. For more detailed technical comparison with similar works, see Section~\ref{sec:technical-comparison-related-works}.

\paragraph{RL with Timed Specifications.} To our knowledge, only a handful of works consider time-sensitive logical reward specifications for RL.~\cite{DBLP:conf/ijcai/0005T19} optimize Metric Temporal Logic (MTL) objectives by translating them into (simple) timed automata, considering only the digital-clock setting.~\cite{DBLP:conf/rtss/DoleGKKT21,DBLP:conf/aaai/DoleGKK023} study subclasses of Duration Calculus that compile to variants of timed automata. Both lines of work rely on timed automata as monitors (reward 1 for accept, 0 for reject) to maximize \emph{satisfaction} of timed specifications.

In contrast to these declarative specifications, our TRM formulation gives designers fine-grained control over reward logic, e.g., integrating state-based delay costs and transition-based rewards, allowing standard Markovian rewards on MDPs, etc. Moreover, beyond the digital-clock setting, we also address the real-time interpretation and investigate several heuristics in both cases, theoretically and experimentally.



\paragraph{RL with Non-Markovian Rewards.}
To specify non-Markovian reward specifications, the most widely used formalisms are temporal logics and finite-state machines (FSMs). Among temporal logics, there has been particular focus on Linear Temporal Logic (LTL)~\cite{DBLP:conf/ijcai/CamachoIKVM19,DBLP:conf/cdc/HasanbeigKAKPL19,DBLP:conf/icra/Bozkurt0ZP20,DBLP:conf/nips/JothimuruganBBA21,DBLP:conf/ijcai/ShaoK23}. Most of these approaches focus on translating formulas into FSMs, which are the operational structures used to guide learning.

FSMs are central to non-Markovian RL due to their compositional structure. Reward machines (RMs)~\cite{DBLP:journals/jair/IcarteKVM22} provide a general means of defining rewards in the FSM structure. Reward machines have been extended widely: stochastic transitions~\cite{DBLP:conf/aaai/CorazzaGN22}, $\omega$-regular properties~\cite{DBLP:conf/ecai/HahnPSS0W23}, partial observability~\cite{DBLP:conf/nips/IcarteWKVCM19},  multi-agent settings~\cite{DBLP:conf/atal/Neary00T21} and continuous-time MDPs~\cite{DBLP:conf/aips/FalahG023,DBLP:conf/ijcai/FalahG025}.

None of these works addresses fine-grained timing constraints as considered in this paper.

\paragraph{Timed Automata in Control and Planning.}
Timed automata~\cite{DBLP:journals/tcs/AlurD94} are a well-established formalism for modeling and verifying systems with time-dependent behavior~\cite{DBLP:conf/cav/BozgaDMOTY98,DBLP:journals/sttt/LarsenPY97}. Quantitative variants of TA, such as priced timed automata~\cite{DBLP:conf/hybrid/BehrmannFHLPRV01,DBLP:conf/hybrid/BouyerBL04} and weighted timed automata \cite{DBLP:conf/hybrid/AlurTP01}, which are extensions with costs or rewards to states and transitions, have been used in strategy synthesis~\cite{DBLP:conf/cav/BehrmannCDFLL07,DBLP:conf/tacas/DavidJLMT15} and planning~\cite{DBLP:conf/fsttcs/BouyerCFL04,DBLP:journals/fmsd/BouyerBL08,DBLP:conf/icaps/TollundJNTL24}.

Most of these works are model-based, using deductive methods (e.g., symbolic synthesis or game-theoretic techniques) to compute optimal strategies or plans. In contrast, we show that timed reward structures can be operated in a model-free RL setting using statistical methods.

\section{Preliminaries and Background}

\subsection{Markov Decision Process}
A \emph{Markov Decision Process} (MDP)~\cite{DBLP:books/lib/SuttonB2018} is a tuple $\mdp = (S, A, T, R, \gamma)$, where $S$ is a finite set of states, $A$ is a finite set of actions, $T: S \times A \times S \rightarrow [0, 1]$ is the transition function that defines the probability of transitioning from state $s$ to state $s'$ given action $a$, $R: S\times A \times S \rightarrow \mathbb{R}$ is the reward function that defines the immediate reward received after taking action $a$ in state $s$, and $\gamma \in [0, 1)$ is the discount factor.

Often, a labeling function is augmented in the definition of an MDP to capture key features of the system. Formally, a \emph{labeled MDP} is a tuple $\mdp = (S, A, T, R, \gamma, \prop, L)$, where $L: S \times A \rightarrow 2^\prop$ is a labeling function that maps each state-action triplet to a set of propositions $\prop$ that hold true in that context.

A \emph{policy} for an MDP is a mapping $\pi: (S \times A)^* S \rightarrow \Delta(A)$ that defines the distribution over actions for a given history of states and actions. For Markovian rewards, it is sufficient to consider deterministic and positional policies $\pi: S \rightarrow A$.

The expected cumulative reward for a policy $\pi$ is defined as the sum of the immediate rewards received over time, discounted by the factor $\gamma$. For a policy $\pi$, it is defined as:
\begin{equation*}
    V^\pi(s) = \mathbb{E}_{\pi} \left[ \sum_{t=0}^{\infty} \gamma^t R(s_t, a_t) \mid s_0 = s \right]
\end{equation*}
where $s_t$ is the state at time $t$, $a_t$ is the action taken at time $t$.

\subsection{Reinforcement Learning with Q-learning}
Reinforcement learning (RL) learns policies that maximize discounted reward in MDPs. In model-free RL, the agent samples the environment without explicit transition or reward models. Q-learning, a standard model-free method, learns the optimal action-value function $Q: S \times A \rightarrow \mathbb{R}$, the expected return of taking $a$ in $s$ and then following the optimal policy. The Q-learning update rule is given by:
\begin{equation*}
    Q(s, a) \leftarrow Q(s, a) + \alpha \left( R(s, a) + \gamma \max_{a'} Q(s', a') - Q(s, a) \right)
\end{equation*}
where $\alpha$ is the learning rate, $R(s, a)$ is the immediate reward received after taking action $a$ in state $s$, and $s'$ is the next state reached after taking action $a$ in state $s$.


\section{Problem formulation}
\label{sec:problem}


\subsection{Timed Reward Machine (TRM)}
We extend classical timed automata with reward functions for the RL setting. For the reward machine formalism, we follow~\cite{IcarteKVM18}.
In our formalism, reward machines are augmented with a set of \emph{clocks} $X$ which can assume values in the time domain $\mathbb{T}$. We consider two different time settings: (1) a digital-clock setting, where $\mathbb{T} = \mathbb{N}$; and (2) a real-time setting, where $\mathbb{T} = \mathbb{R}_{\geq 0}$.  

A \emph{guard} is a conjunction of constraints of the form $\phi := x \bowtie c$, where $x \in X$ is a clock, $\bowtie\ \in \{<, \leq, =, \geq, >\}$ is a comparison operator, and $c \in \mathbb{N}$ is a non-negative constant. We denote the set of all guards over $X$ by $\Phi(X)$.
Given a set of clocks $X$, propositions $\prop$, a TRM is a finite state machine defined as a tuple $\mathcal{A} = (U, u_0, F, \Delta_u, \Delta_r)$, where:
\begin{itemize}
    \item $U$ is a finite set of states, $u_0 \in U$ is the initial state, and $F\subset U$ is the set of terminal (sink) states;
    \item $\Delta_u \subseteq U \times  2^\prop \times \Phi(X) \times 2^X \times U$ is the transition relation defining the next state given the current state, active propositions in the current state, a guard over clocks, and the clocks to reset. We denote a transition using $\theta$.
    \item $\Delta_r = \Delta^u_r \cup \Delta^\theta_r$, where 
    $\Delta^u_r: U \rightarrow [S \rightarrow \mathbb{R}]$ is the state-based reward function and $\Delta^\theta_r: \Delta_u \rightarrow [S \times A \times S\rightarrow \mathbb{R}]$ is the transition-based reward function.
\end{itemize}
While our TRM formalism is inspired by priced timed automata, it is generalized for RL frameworks, allowing general Markovian reward functions over both states and transitions. In our setting, one can consider negative rewards to be costs.

A timed word $w = (d_0, l_0) (d_1, l_1) \dots (d_{n}, l_{n})\in (\mathbb{T} \times 2^{\prop})^\ast$ is a sequence in which $d_i \in \mathbb{T}$ is a time delay at position $i$ and $l_i \in 2^{\prop}$ is the set of propositions observed after that delay at position $i$. 
A \emph{run} $\mathcal{A}^w$ of a timed reward machine $\mathcal{A}$ on $w$ is a sequence
$(u_0, v_0) \xrightarrow{d_0,\theta_0,r_0} (u_1, v_1) \xrightarrow{d_1, \theta_1, r_1} \ldots \xrightarrow{d_{n},\theta_{n},r_{n}} (u_{n+1}, v_{n+1})$, where, for each $i$, $u_i\in U$ is the state of the TRM, $v_i \in \mathbb{T}^{|X|}$ is the clock valuation, $r_i$ is the reward function, and $\theta_{i}$ is the transition at that position.
The above run satisfies the following conditions:
\begin{itemize}
    \item $u_0$ is the initial state, $v_0(x) = 0$ for all $x \in X$.
    \item let $\theta_{i}= (u_i, l_i, \phi_i, \rho_i, u_{i+1}) \in \Delta_u$, then the clock valuations satisfy the following condition: $v_{i} + d_{i}$ satisfies the guard $\phi_i$, and $v_{i+1} = [\rho_i](v_i + d_{i})$, where $[\rho_i]$ is the reset function that resets the clocks in $\rho_i$ to $0$ and keeps the others unchanged.
    \item $r_{i}[0] = \Delta^\theta_r(\theta_{i})$ and $r_{i}[1] = \Delta^u_r(u_{i})$ are the transition- and state-based reward functions at position $i$, resp.
\end{itemize}
  
A TRM $\aut$ is \emph{deterministic}, if for every state $u \in U$, every set of propositions $l \in 2^\prop$, and every pair of guards $\phi_1, \phi_2 \in \Phi(X)$ such that $(u, l, \phi_1, \rho, u') \in \Delta_u$ and $(u, l, \phi_2, \rho', u'') \in \Delta_u$, it holds that $\phi_1 \cap \phi_2 = \emptyset$.
As is a common assumption in the literature on reward machines~\cite{IcarteKVM18}, we only consider deterministic TRMs.

\paragraph{Interpretation of TRM on MDPs.}

We model the underlying environment as an MDP $\mathcal{M}= (S, A', T, \gamma, L, \prop)$\footnote{We here drop the Markovian reward $R$ as it is given by a TRM.}, which extends the standard labeled MDP definition by including timing delays in action space: $A'=\mathbb{T}\times A$. Here, the agent has the option to either act immediately or wait for a chosen amount of time, referred to as \emph{delay}, before taking the next action.
Augmenting the agent with delay actions is analogous to standard settings in timed control and games~\cite{DBLP:conf/fsttcs/BouyerCFL04,DBLP:conf/hybrid/BehrmannFHLPRV01}.

In our setting, the agent’s trajectory takes the form of a sequence $\zeta= s_0\cdot (d_0, a_0)\cdot s_1\cdot (d_1, a_1) \cdots (d_{n},a_{n}) \cdot s_{n+1} \in (S\times (\mathbb{T}\times A))^\ast\times S$, where 
$s_i \in S$ is the MDP state, $d_i\in \mathbb{T}$ is the delay chosen, and $a_i \in A$ is the action taken.
Since RL algorithms typically operate on sampled finite trajectories, we restrict attention to bounded-horizon trajectories in this work.


A trajectory $\zeta$ induces a timed word $w^\zeta = (d_0+1, L(s_0,a_0)) \dots$ $(d_n+1, L(s_n,a_n))$, which serves as input to the TRM $\mathcal{A}$.
The $+1$ offset in the delays represents the execution duration of an action in the environment. Our choice of a unit offset aligns better with the definitions of reward machines; setting all delays \(d_i = 0\) yields a standard untimed word. One could, however, easily consider the action duration to be any constant \(c\in\real\) offset.
By doing so, we allow general real-time interpretation of MDPs.

On a timed word $w^\zeta$, the TRM $\mathcal{A}$ produces a run
$\mathcal{A}^\zeta: 
(u_0, v_0)$ $\xrightarrow{d_0+1, \theta_0,r_0} (u_1,v_1) \xrightarrow{d_1+1, \theta_1,r_1} \ldots \xrightarrow{d_{n}+1,\theta_n, r_n} (u_{n+1}, v_{n+1}).$
We now define the discounted cumulative reward for a trajectory $\zeta$. This definition follows the standard treatment of discounting in decision processes with sojourn times, as presented in~\cite[Eq~11.3.1]{DBLP:books/wi/Puterman94}.

Following that framework, each decision point occurring at time $t_i$ contributes a discounted reward of $\gamma^{t_i} \cdot R_i$ to the total return, where $R_i$ denotes the total reward accumulated during the transition from state $s_i$ to $s_{i+1}$ after taking action $a_i$. The reward $R_i$ consists of two parts: the lump-sum reward $r^\theta_i(s_i,a_i,s_{i+1})$ obtained from the transition $\theta$ in $\mathcal{A}$ and the state-based reward $r^u_i(s_i)$ obtained from the state $u$ in $\mathcal{A}$, accrued over the interval $[t_i,t_{i+1}]$. 
Formally, the total discounted cumulative reward is defined as:
\begin{equation}\label{eq:discounted-cumulative-reward}
    G^\zeta = \sum_{i=0}^{n} \gamma^{t_{i}} \cdot [ r^\theta_i(s_{i},a_i,s_{i+1}) + r^u_i(s_{i})], \text{where}
\end{equation}
\begin{itemize}
\item $t_i = \sum_{j=0}^{i-1} (d_{j}+1)$ for $i\geq 1$, $t_{0}=0$.
\item $r^\theta_i = \Delta^\theta_r(\theta_{i})$ is the transition-based reward at point $i$.
\item $r^u_i = 
\begin{cases} 
    \sum_{t=0}^{d_i-1} \gamma^{t}\Delta^u_r(u_i) = \frac{1-\gamma^{d_i}}{1-\gamma} \Delta^u_r(u_i) & \hspace{-2.2mm}\text{for } \mathbb{T} = \mathbb{N}, \\
    \int_{0}^{d_i} \gamma^{t}\Delta^u_r(u_i) \, {dt} = \frac{1-\gamma^{d_i}}{-\ln(\gamma)} \Delta^u_r(u_i) & \hspace{-2.2mm}\text{for } \mathbb{T} = \mathbb{R}_{\geq 0}
\end{cases}$ \\ is the state-based reward at point $i$.
\end{itemize}
We calculate the state-based reward $r^u_i$ differently for the digital and real-time settings, following standard interpretations~\cite{DBLP:books/wi/Puterman94}. In the digital clock setting, it is accumulated at each timestep during the delay period, whereas in the real-time setting, it is integrated over the delay interval.

\begin{figure}
    \small
        \centering
    \begin{tikzpicture}[scale=1]

        \fill[green!20] (1,0) rectangle (2,1);
        \fill[green!20] (0,1) rectangle (1,2);
        \fill[red!20] (1,1) rectangle (2,2);

        \foreach \x in {0,1}{
            \foreach \y in {0,1}{
                \draw (\x, \y) rectangle (\x+1, \y+1);
            }
        }
        \node at (0+0.17, 0+0.17) {\tiny$s_0$};
        \node at (1+0.17, 0+0.17) {\tiny$s_3$};
        \node at (0+0.17, 1+0.17) {\tiny$s_1$};
        \node at (1+0.17, 1+0.17) {\tiny$s_2$};
        \node at (1-0.21, 1-0.17) {\tiny$-2$};
        \node at (2-0.21, 1-0.17) {\tiny$-1$};
        \node at (1-0.21, 2-0.17) {\tiny$-1$};
        \node at (2-0.21, 2-0.17) {\tiny$-4$};
        \node at (0+0.5, 0+0.5) {\agent};
        \node at (0+0.5, 1+0.5) {$p$};
        \node at (2-0.5, 0+0.5) {$q$};
    \end{tikzpicture}

        \begin{tikzpicture}[->, >=stealth, node distance=4cm, on grid, auto, initial text=]

        \newcommand{\edgelab}[4]{%
    ${#1}$,\ %
    \if\relax\detokenize{#2}\relax $\top$ \else $#2$ \fi,\ %
    \if\relax\detokenize{#3}\relax $\varnothing$ \else $\{#3\}$ \fi,\ %
    #4%
  }
    \node[state, initial, rectangle, rounded corners] (u0) {$u_0,c$};
    \node[state,rectangle, rounded corners] (u1) [right of=u0] {$u_1,c$};
    \node[state,rectangle, rounded corners, fill=gray!10] (u2) [right of=u1] {$u_2,c$};

    \path
  (u0) edge
       node[above]{\edgelab{\{p\}}{x>2}{}{5}}
       node[below]{$\theta_1$}
       (u1)

  (u1) edge
       node[above]{\edgelab{\{q\}}{x>5}{}{10}}
       node[below]{$\theta_3$}
       (u2)

  (u0) edge[loop above]
       node[align=center]
       {$\theta_0$:\edgelab{\{\}}{}{}{0}}
       (u0)

  (u1) edge[loop above]
       node[align=center]
       {$\theta_2$:\edgelab{\{\}}{}{}{0}}
       (u1)
;
\end{tikzpicture}
    \caption{Environment (above) along with TRM objective (below). The cost function $c$ is depicted in the top-right corner of each state.}
    \label{fig:small_example}
\end{figure}

\begin{table*}
\begin{center}\footnotesize
\begin{minipage}[t]{0.6\linewidth}
\centering\textbf{Trajectory $\boldsymbol{\zeta_1}$}\\[-2mm]
\[
\begin{aligned}[t]
&\textrm{Trajectory:}~
&& \zeta_1 = s_0\!\cdot\!(2,\xup{})\!\cdot\! s_1\!\cdot\!(1,\xright{})\, s_2\!\cdot\!(0,\xdown{})\!\cdot\! s_3 \\[2pt]
&\textrm{Timed word:}~
&& w^{\zeta_1}=(2{+}1,\{p\})(1{+}1,\emptyset)(0{+}1,\{q\}) \\[2pt]
&\textrm{TRM run:}~
&& \mathcal{A}^{\zeta_1}=(u_0,[0]) \xrightarrow{{3,\theta_1,(5,-2)}} (u_1,[3]) \xrightarrow{2,\theta_2,(0,-1)} (u_1,[5]) \xrightarrow{1,\theta_3,(10,-4)} (u_2,[6]) \\[2pt]
&\textrm{Digital-time:}~
&& G^{\zeta_1}=\big[5+(-2)(1+\gamma^1)\big]+\gamma^3[0+(-1)]+\gamma^5[10+0]\approx 6.4 \\[2pt]
&\textrm{Real-time:}~
&& G^{\zeta_1}=\big[5+(-2)\!\!\int_{0}^{2}\!\gamma^t dt\big]
+\gamma^3\big[0+(-1)\!\!\int_{0}^{1}\!\gamma^t dt\big]
+\gamma^5[10+0]\approx 6.6
\end{aligned}
\]
\vspace{2mm}
\end{minipage}
\begin{minipage}[t]{0.6\linewidth}
\centering\textbf{Trajectory $\boldsymbol{\zeta_2}$}\\[-2mm]
\[
\begin{aligned}[t]
&\textrm{Trajectory:}~
&& \zeta_2 = s_0\!\cdot\!(2,\xup{})\!\cdot\! s_1\!\cdot\!(0,\xright{})\, s_2\!\cdot\!(1,\xdown{})\!\cdot\! s_3 \\[2pt]
&\textrm{Timed word:}~
&& w^{\zeta_2}=(2{+}1,\{p\})(0{+}1,\emptyset)(1{+}1,\{q\}) \\[2pt]
&\textrm{TRM run:}~
&& \mathcal{A}^{\zeta_2}=(u_0,[0]) \xrightarrow{3,\theta_1,(5,-2)} (u_1,[3]) \xrightarrow{1,\theta_2,(0,-1)} (u_1,[4]) \xrightarrow{2,\theta_3,(10,-4)} (u_2,[6]) \\[2pt]
&\textrm{Digital-time:}~
&& G^{\zeta_2}=\big[5+(-2)(1+\gamma^1)\big]+\gamma^3[0+0]+\gamma^4[10+(-4)]\approx 5.1 \\[2pt]
&\textrm{Real-time:}~
&& G^{\zeta_2}=\big[5+(-2)\!\!\int_{0}^{2}\!\gamma^t dt\big]
+\gamma^3[0+0]
+\gamma^4\big[10+(-4)\!\!\int_{0}^{1}\!\gamma^t dt\big]\approx 5.4
\end{aligned}
\]
\end{minipage}
\end{center}
\caption{Description of trajectories $\zeta_1$ and $\zeta_2$ from Example~\ref{ex:definition} under digital and real-time semantics ($\gamma=0.9$).}
\label{fig:trajectory-example}
\end{table*}

\begin{example}\label{ex:definition}
To explain the definitions, consider the environment and the TRM in Fig.~\ref{fig:small_example}. The environment has two features, $p$ and $q$, which denote moving into states $s_1$ and $s_3$, resp. Starting at $s_0$, the TRM $\mathcal{A}$ requires the agent first to observe $p$ and then $q$, while satisfying simple clock constraints.

We illustrate two trajectories, $\zeta_1$ and $\zeta_2$, for this example in Table~\ref{fig:trajectory-example}. 
The figure also summarizes, for each trajectory, the induced timed words, the TRM runs, and the resulting discounted returns. 
Both trajectories use the same environment actions but differ in their delay choices, which leads to different behavior under the TRM. For instance, $\zeta_1$ waits in a ``good'' state $s_1$, whereas $\zeta_2$ waits in a ``bad'' state $s_2$, incurring a higher cost. 
Consequently, in the digital-clock setting with $\gamma=0.9$, $\zeta_1$ attains a higher discounted return ($G^{\zeta_1}\approx6.4$) than $\zeta_2$ ($G^{\zeta_2}\approx5.1$). 
The exact ordering holds in real-time with the same delays ($\approx6.6$ vs.\ $\approx5.4$).
\end{example}

\paragraph{Properties of TRM.} We now make some observations about trajectories and rewards in TRMs.
As in classical timed automata, a trajectory in TRMs can induce arbitrarily large clock values and delays. However, one can bound them to reason about the expected cumulative reward. As standard~\cite{DBLP:journals/tcs/AlurD94}, we rely on the maximum constant $M$ appearing in the guards of the TRM $\mathcal{A}$.
Formally, 
$\overline{v}[x] = v[x] \text{ if } v[x]\leq M \text{ else } \overline{v}[x] = \infty$ for all $x\in X$, where $\infty$ denotes the clock value exceeds $M$ which follows the usual comparison semantics, e.g., $\infty > 2$ and $\infty \not\leq 3$.
We also define bounded delays as $\overline{d} = d$ if $d < M$ else $\overline{d} = M$.

We extend this definition to trajectories: for a trajectory $\zeta = s_0\cdot (d_0, a_0)\cdots (d_{n},a_{n}) \cdot s_{n+1}$, we define a \emph{delay-bounded} trajectory $\overline{\zeta} = s_0\cdot (\overline{d}_0, a_0)\cdots (\overline{d}_{n},a_{n}) \cdot s_{n+1}$.
We can show that the delay-bounded trajectory induces a ``similar'' run in a TRM $\mathcal{A}$ to the original trajectory $\zeta$. This follows from the fact that any clock value $v[x]$ beyond $M$ has the same behavior with any guards on $x$.

\begin{lemma}~\label{lem:traj_delay_bound}
     Let $\zeta= s_0\cdot (d_0, a_0)\cdots (d_{n},a_{n}) \cdot s_{n+1}$ be a trajectory and $\overline{\zeta} = s_0\cdot (\overline{d}_0, a_0)\cdots (\overline{d}_{n},a_{n}) \cdot s_{n+1}$ be its delay-bounded trajectory. Also, let $\mathcal{A}^\zeta: (u_0, v_0) \xrightarrow{d_0+1,\theta_0,r_0} \ldots \xrightarrow{d_{n}+1,\theta_n, r_n} (u_{n+1}, v_{n+1})$ be the run of $\mathcal{A}$ on $\zeta$. Then, the following holds: $\overline{\zeta}$ has run $\mathcal{A}^{\overline{\zeta}}: (u_0, \overline{v}_0) \xrightarrow{\overline{d}_0,\theta_0,\overline{r}_0} \ldots \xrightarrow{\overline{d}_{n},\theta_n, \overline{r}_n} (u_{n+1}, \overline{v}_{n+1})$ where the $u_i$'s, and $\theta_i$'s remain the same as in $\mathcal{A}^{\zeta}$.
\end{lemma}
\begin{proof}
    The proof follows from the fact that any clock value $v[x]$ beyond $M$ has the same behavior with any guards on $x$. Formally, let us, w.l.o.g., consider $d_i$ to be the first delay in $\zeta$ such that $d_i \geq M$. Let $\phi_i$ be the guard of transition $\theta_i$ and $v_i+d_i+1\models \phi_i$. Then, we have $v_i+\overline{d}_i+1\models \phi_i$ since constants appearing $\phi_i$ are bounded by $M$.
\end{proof}

Moreover, under certain reasonable conditions, bounding the delays can improve the discounted reward. These include inducing costs for delaying in states rather than rewarding, and providing a high terminal reward for completing all tasks. Formally, the state–reward function always has negative values, i.e., $\Delta_{r}(u) < 0 \text{ for every } u \in \mathcal{A}$, and we search for ``good'' trajectories $\zeta$ where $G_i^{\zeta} > 0$ for all decision points $i$, $G_i^{\zeta}$ being the discounted return from \(i\) onward. Trajectories as $\zeta$ can occur, e.g., when terminal rewards along $\zeta$ are sufficiently large.
The following lemma presents this idea.
\begin{restatable}{lemma}{boundedactions}
~\label{lem:bounded-actions}
     Let $\mathcal{A}$ be a TRM, $\zeta$ be a trajectory of $\mathcal{A}$ and $\overline{\zeta}$ be its delay-bounded trajectory.
     If 
     (1) 
     $\Delta^u_r(u) < 0$ for all $u \in U$; and (2) 
     $G^\zeta_i>0$ 
      for all $0\leq i\leq n$,
     then $G^{\overline{\zeta}}\geq G^\zeta$.
\end{restatable}
\begin{proof}
We demonstrate the digital case (which is similar to the real-time case) using backward induction on the trajectory length $i$. As the hypothesis, we consider that $G^{\overline{\zeta}}_i \geq G^\zeta_i$ for every $i>k$. For the decision point $k$, we show 
\[(G^{\overline{\zeta}}_k = R^{\overline{\zeta}}_k + \gamma^{\overline{d}+1} G^{\overline{\zeta}}_{k+1}) \geq (G^\zeta_k = R^\zeta_k + \gamma^{d+1} G^\zeta_{k+1}).\] 
Firstly, the reward $R^{\overline{\zeta}}_k > R^{\zeta}_k$ is higher for the trajectory since state rewards accrue less cost: \[\frac{1-\gamma^{d}}{1-\gamma}\Delta^u_r(u_k) < \frac{1-\gamma^{\overline{d}}}{1-\gamma}\Delta^u_r(u_k).\] Secondly, we have: \[\gamma^{\overline{d}+1} G^{\overline{\zeta}}_k \geq \gamma^{d+1} G^{\overline{\zeta}}_k \geq \gamma^{d+1} G^{\zeta}_k\].
\end{proof}

Based on the assumptions of Lem.~\ref{lem:bounded-actions}, we can bound the delay space to $\mathbb{D} = \mathbb{T}\cap [0, M]$. In our setting, policies are defined as $\pi: (S\times \mathbb{D}\times A)^* S \rightarrow \mathbb{D} \times A$, which map a trajectory to a bounded delay and an action. We call such policies \emph{delay-discounted} policies, as the reward functions defined in Equation~\ref{eq:discounted-cumulative-reward} incorporate the discount factor to delays as well.

The expected cumulative reward of a policy $\pi$ is then the expected discounted sum of rewards over all possible trajectories following $\pi$ from state $s$: $V^\pi(s) = \mathbb{E}_{\zeta \sim \pi} [G^\zeta \mid \zeta[0] = s]$.
A policy $\pi$ is \emph{optimal} (resp., $\varepsilon$-optimal) if $V^\pi=V^*$ (resp., $V^\pi \geq V^* - \varepsilon$), where $V^* = \sup_{\pi} V^\pi(s)$.

\begin{problem}
\label{prob:strat-synth}
    Given a TRM $\mathcal{A}$ and a MDP $\mathcal{M}$, find a delay-discounted ($\varepsilon$-) optimal policy $\pi^*$.
\end{problem}

In the following sections, we propose algorithms for Problem~\ref{prob:strat-synth}, both in the digital-clock (dc) and real-time (rt) settings.


\section{The Digital Clock Setting}
\label{sec:tabularQ-digital}

\paragraph{Cross-product space.}
The most important aspect of our approach is the construction of a cross-product between the underlying MDP $M$ and the TRM $\mathcal{A}$.

The cross-product MDP $\mathcal{M}^{\otimes} = \mathcal{M}\otimes \mathcal{A}$ is similar to what is done for classical reward machines, except that one needs to keep track of the clock values as well. For this, we again consider the maximum constant $M$ appearing in the guards of clock $x\in X$ of $\mathcal{A}$ and use the symbol $\infty$ for clock values that go beyond $M$.

The cross product $(S^\otimes, A^\otimes, T^\otimes, R^\otimes)$ is defined below, where:
\begin{itemize}
\item $S^\otimes = S \times U \times V$, where $S$ is the set of states of the MDP and $U$ is the set of states of the TRM, and $V = (\{0,\dots,M\}\cup \{\infty\})^{|X|}$ is the set of bounded clock valuations.
\item $A^\otimes = \mathbb{D}\times A$ is the set of actions.
\item $T^\otimes: S^\otimes \times A^\otimes \times S^\otimes \rightarrow [0, 1]$ and  $R^\otimes: S^\otimes \times A^\otimes \times S^\otimes \rightarrow \mathbb{R}$  are the transition function and the reward function, respectively, defined as follows:
\begin{align*}
& T^\otimes((s, u, v), (d, a), (s', u', v')) = T(s, a, s'), \text{and} \\
& R^\otimes((s, u, v), (d, a), (s', u', v')) = r^u(s) + r^\theta(s,a,s'),\\
& \text{ if }\exists \theta = (u,L(s,a),\phi,\rho,u'), \text{s.t.}
\ v+d+1 \models \phi, \text{ and } v' =  [\rho](v + d + 1), \text{ where }
\end{align*}
for all $x\in X, (v + d + 1)[x] \!=\! v[x] + d + 1$ if $v[x] + d + 1 \le M$, otherwise $\infty$; 
$r^u \!=\! \frac{1-\gamma^d}{1-\gamma} \Delta^u_r(u)$, and
$r^\theta \!=\! \Delta^\theta_r(\theta)$.


\end{itemize}

\begin{theorem}
    Optimal positional deterministic delay-discounted policy exists for the cross product MDP $\mathcal{M}^\otimes$.
\end{theorem}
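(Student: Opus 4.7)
The plan is to reduce the claim to the classical existence result for optimal stationary deterministic policies in finite discounted semi-Markov decision processes (semi-MDPs). The first step is to verify that $\mathcal{M}^\otimes$ fits the semi-MDP model. Since the TRM's maximum guard constant $M$ bounds the clock valuation space $V=(\{0,\dots,M\}\cup\{\infty\})^{|X|}$, and \Cref{lem:bounded-actions} (under its stated cost/terminal-reward conditions) justifies restricting to the bounded delay space $\mathbb{D}=\mathbb{T}\cap[0,M]$, both $S^\otimes$ and $A^\otimes$ are finite. Each action $(d,a)$ applied in state $(s,u,v)$ yields an outcome in which (i)~$s\to s'$ occurs stochastically with probability $T(s,a,s')$, (ii)~the TRM/clock components update deterministically through the unique enabled transition $\theta$, giving $u'$ and $v'=[\rho](v+d+1)$, and (iii)~the sojourn duration is $d+1$ time units with immediate reward $R^\otimes((s,u,v),(d,a),(s',u',v'))$. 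Critically, both $T^\otimes$ and $R^\otimes$ depend only on the current cross-product state and action, so the original non-Markovian reward becomes Markovian on $\mathcal{M}^\otimes$.

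Next, I would check that the return $G^\zeta$ of \Cref{eq:discounted-cumulative-reward} coincides with the semi-MDP discounted return of the corresponding trajectory on $\mathcal{M}^\otimes$ under the standard discount-per-time-unit convention. By induction on trajectory length, unfolding the telescoping factor $\gamma^{t_i}$ with $t_i=\sum_{j<i}(d_j+1)$ shows that each decision epoch contributes a reward $R^\otimes$ discounted by $\gamma^{d_i+1}$ relative to the previous epoch, matching the semi-MDP Bellman recursion. Because the TRM is deterministic, $(u,v)$ is a sufficient statistic for its progress along any history, so the induced correspondence between trajectories (and hence between policies) preserves expected return: every delay-discounted policy on the original problem lifts to a policy on $\mathcal{M}^\otimes$ with the same value, and vice versa.

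Finally, I would invoke the classical theorem for finite discounted semi-MDPs (Puterman~\cite{DBLP:books/wi/Puterman94}, Chapter~11) to obtain an optimal stationary deterministic policy $\pi^\ast:S\times U\times V\to \mathbb{D}\times A$ on $\mathcal{M}^\otimes$, which is exactly the claimed positional deterministic delay-discounted policy. The only mild technical subtlety, rather than a real obstacle, is that the per-transition discount $\gamma^{d+1}$ varies with the chosen delay; this is handled uniformly by the sojourn-time formulation of semi-MDPs, so no machinery beyond Puterman's framework is needed. The hardest verification is the bijective policy correspondence in the second step, but this reduces to a straightforward induction once one observes that the deterministic TRM makes $(u,v)$ a function of the MDP history.
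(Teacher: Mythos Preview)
Your proposal is correct and follows essentially the same approach as the paper: both reduce to finiteness of $S^\otimes$ and $A^\otimes$ in the digital setting and invoke Puterman's existence results. Your version is more careful in explicitly identifying the action-dependent discount $\gamma^{d+1}$ as placing the problem in the semi-MDP framework (Chapter~11), a distinction the paper's one-line justification glosses over; your policy-correspondence step, while sound, goes beyond what the theorem statement strictly requires, since the claim concerns only $\mathcal{M}^\otimes$ itself.
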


The proof holds, following standard results in MDPs~\cite{DBLP:books/wi/Puterman94}, because the cross product $\mathcal{M}^\otimes$ has a finite state and action space.

\paragraph{Q-learning on cross-product space.} We adapt Q-learning to the cross-product space by modifying the Q-value updates as follows:
\begin{align*}
    Q((s,u,v), (d,a)) \leftarrow &Q((s,u,v), (d,a)) + \\
    &\alpha\left( [R + \gamma^{(d+1)} \max_{(d',a')} Q((s',u',v'), (d',a'))] - Q((s,u,v), (d,a)) \right),
\end{align*}
where $R$ is the reward returned by the TRM $\mathcal{A}$ for the transition from $(s,u,v)$ to $(s',u',v')$ on choosing delay of $d$ and action $a$.

The convergence follows from standard results~\cite{DBLP:journals/ml/WatkinsD92}, since $\mathcal{M}^{\otimes}$ is a valid finite MDP with probabilities $p((s',u',v')|(s,u,v),(d,a)) = p(s'|s,a)$.
\begin{theorem}
    Q-learning on the cross-product $\mathcal{M}^{\otimes}$ converges to an optimal policy under standard assumptions: every state $(s,u,v)\in S^\otimes$ and action $(d,a)\in A^\otimes$ is visited infinitely often; and the learning rate $\alpha_t$ is decreased over time such that $\sum_{t=0}^{\infty}\alpha_t = \infty$ and $\sum_{t=0}^{\infty}\alpha^2_t < \infty$.
\end{theorem}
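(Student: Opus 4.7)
The plan is to reduce the statement to the classical Q-learning convergence theorem of Watkins and Dayan, so the main work is verifying that the cross-product $\mathcal{M}^\otimes$ satisfies the hypotheses of that theorem, namely: it is a well-defined finite MDP with bounded rewards whose transition and reward structure depend only on the current augmented state and action (i.e., the Markov property on the cross-product holds), together with the two Robbins--Monro stepsize conditions and the infinite-visitation requirement, both of which are already in the statement.

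First I would verify that $\mathcal{M}^\otimes$ is a bona-fide finite MDP. Finiteness of $S^\otimes = S \times U \times V$ follows because $S$ and $U$ are finite by definition and the clock valuation space $V = (\{0,\dots,M\} \cup \{\infty\})^{|X|}$ is finite by construction (using the max-constant abstraction justified by \Cref{lem:traj_delay_bound}). Finiteness of $A^\otimes = \mathbb{D}\times A$ follows since $\mathbb{D} = \mathbb{T} \cap [0,M]$ is finite in the digital setting $\mathbb{T} = \mathbb{N}$, and $A$ is finite. The transition probability factorizes as $T^\otimes((s,u,v),(d,a),(s',u',v')) = T(s,a,s')$ whenever the TRM component $(u',v')$ is the (deterministic) successor of $(u,v)$ under $(L(s,a,s'),d+1)$, and $0$ otherwise; determinism of $\mathcal{A}$ ensures this successor is unique, so $T^\otimes$ is a well-defined stochastic kernel on the finite product space. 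Hence the Markov property holds on $\mathcal{M}^\otimes$.

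Next I would check boundedness of $R^\otimes$. Because $U$ and $\Delta_u$ are finite, the sets $\{\Delta^u_r(u) : u \in U\}$ and $\{\Delta^\theta_r(\theta) : \theta \in \Delta_u\}$ of state- and transition-based reward functions are finite families of bounded (Markovian) reward functions on the finite set $S \times A \times S$; the one-step reward $R^\otimes = r^u + r^\theta$ is a sum of $r^\theta$ and the geometric-sum state reward $\frac{1-\gamma^d}{1-\gamma}\Delta^u_r(u)$ with $d \le M$, so $|R^\otimes|$ is uniformly bounded. The effective per-step discount $\gamma^{d+1}$ lies in $(0,1]$, but since we use the standard formulation where the Bellman operator is a $\gamma$-contraction on the positional Q-function over $\mathcal{M}^\otimes$ (identifying the delay as part of the action, so the ``step'' is a single decision point with discount $\gamma^{d+1} \le \gamma$), the discounted setting remains a contraction.

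Having established that $\mathcal{M}^\otimes$ is a finite MDP with bounded rewards and that the Q-learning update in the excerpt is exactly the classical update applied to this MDP with per-transition discount $\gamma^{d+1}$, I would conclude by directly invoking the convergence theorem of Watkins and Dayan: under the stated Robbins--Monro step-size conditions $\sum_t \alpha_t = \infty$, $\sum_t \alpha_t^2 < \infty$ and the assumption that every pair $((s,u,v),(d,a))$ is visited infinitely often, the iterates $Q_t$ converge with probability one to the optimal action-value function $Q^\ast$ of $\mathcal{M}^\otimes$, from which the greedy policy is an optimal positional deterministic delay-discounted policy by the previous theorem. The main (only) subtle point I expect is making explicit that the variable per-step discount $\gamma^{d+1}$ does not break the contraction argument; this is handled by treating $(d,a)$ as a single action, so that the Bellman operator has modulus at most $\gamma < 1$, bringing us back exactly to Watkins' setting.
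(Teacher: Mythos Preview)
Your proposal is correct and follows the same approach as the paper, which simply invokes the Watkins--Dayan result after noting that $\mathcal{M}^\otimes$ is a valid finite MDP with transition probabilities $p((s',u',v')\mid (s,u,v),(d,a)) = p(s'\mid s,a)$. You supply considerably more detail than the paper (finiteness of $S^\otimes$ and $A^\otimes$, determinism of the TRM successor, boundedness of rewards, and the observation that treating $(d,a)$ as a single action keeps the contraction modulus at most $\gamma$), but the underlying argument is identical.
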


\subsection{Counterfactual Imagining for Delays}
\label{sec:crm-delays}
Adding delay actions substantially enlarges the action space, so we use
\emph{counterfactual imagining} to explore time alternatives efficiently.

During the Q-learning process, given a realised transition \(\langle (s,\bar{u},\bar{v}),(d,a),r,(s',u',v')\rangle\) in the product MDP , we synthesize counterfactual experiences by
varying the TRM states, clock valuations, and delays:
\[
\big\langle (s,u,v),\,(\bar d,a),\,\bar r,\,(s',\bar u',\bar v')\big\rangle,
\]
where $(\bar{u}, \bar{v}) \xrightarrow{\bar{d},\theta,\bar{r}} (\bar{u}', \bar{v}')$ is a single step based on TRM.

Varying valuations \(\bar v\) and delays \(\bar d\) over all possibilities requires adding several alternatives due to potentially large clock range \(\{0,\ldots,M\}\).
To keep the number of counterfactuals manageable, we consider adding only reasonable alternatives for valuations and delays.
First, we only consider valuations $\bar{v}$ that are close to the realized valuation $v$, i.e., \(\|\bar{v} - v\|_\infty \le r_{crm}\) for a fixed radius \(r_{crm}\) (typically less than 5).
Second, we consider delays $\bar{d}$ that enable satisfaction of guards in the alternative TRM state $\bar{u}$ and clock valuation $\bar{v}$.
In particular, we add delays $\bar{d}$ corresponding to all transitions \(\theta= (\bar{u}, L(s,a),\phi,\rho,\bar{u}')\) such that \(\bar v + \bar d+1 \models \phi\).

In contrast, for (untimed) reward machines~\cite{DBLP:phd/ca/Icarte22}, counterfactuals vary only the RM state \(\bar u\), without varying clocks or delays.



\section{The Real-time Clock Setting}
\label{sec:tabularQ-cont}
In the real-time case, clock values and delay actions can be real-valued, and delays can be chosen from the continuous range $[0, M]$, enabling more precise timing of actions. This allows for more possible policies, often leading to better rewards, which we illustrate through the following example.

\begin{example}
Consider the example in Figure~\ref{fig:continuous-example}. In the real-time setting, there exists a positive-valued policy, which can be seen from the trajectory: $\zeta_1 = s_0\cdot (0.1, \xright{})\cdot s_1\cdot (0, \xright{})\cdot s_2$, which achieves a discounted reward of $[5+(-1){(1-\gamma^{0.1})}/{-\ln(\gamma)}]+\gamma^{1.1}[7] \approx 11.13$ for $\gamma = 0.9$. In contrast, there is no positive-valued policy in the digital-clock setting. For instance, similar trajectories in this setting, $\zeta_2 = s_0\cdot (0, \xright{})\cdot s_1\cdot (0, \xright{})\cdot s_2$ and $\zeta_3 = s_0\cdot (1, \xright{})\cdot s_1\cdot (0, \xright{})\cdot s_2$ will achieve discounted rewards of $-10+\gamma^1[7] \approx -3.7$ and $[5+(-1){(1-\gamma^{1})}/{(1-\gamma)}]+\gamma^2[-10] \approx -4.1$, resp.
\end{example}

\begin{figure}[h]
  \centering
\begin{tikzpicture}[->, >=stealth, node distance=5cm, on grid, auto, font=\scriptsize, initial text=]
\newcommand{\edgelab}[4]{%
    $#1$,\ %
    \if\relax\detokenize{#2}\relax $\top$ \else $#2$ \fi,\ %
    \if\relax\detokenize{#3}\relax $\varnothing$ \else $\{#3\}$ \fi,\ %
    #4%
  }
    \node[state, initial, rectangle, rounded corners] (u0) at (-1,0) {$u_0,-1$};
    \node[state,rectangle, rounded corners, fill=gray!10] (u1) at (3,0) {$u_1$};
    \node[state,rectangle] (m1) at (5,0) {\agent};
    \node at (5-0.25, 0-0.25) {\tiny$s_0$};
    \node[state,rectangle] (m2) at (5.95,0) {};
    \node at (5.95-0.25, 0-0.25) {\tiny$s_1$};
    \node[state,rectangle] (m3) at (6.9,0) {$p$};
    \node at (6.9-0.25, 0-0.25) {\tiny$s_2$};

    \path
  (u0) edge [loop above]  node{\edgelab{\{\}}{y>1}{}{5}} (u0)
  (u0) edge [loop below]  node{\edgelab{\{\}}{y\le 1}{}{-10}} (u0)
  (u0) edge [bend left]   node{\edgelab{\{p\}}{x<3}{}{7}} (u1)
  (u0) edge [bend right]  node[below]{\edgelab{\{p\}}{x\ge 3}{}{-10}} (u1)
;
\end{tikzpicture}
\caption{TRM (on the left) and MDP (on the right) illustrating agent behavior in the real-time setting.}
\label{fig:continuous-example}
\end{figure}

Moreover, in contrast to the digital-clock case, in the real-time setting, optimal policies may not exist, as stated below.
\begin{theorem}
    An optimal delay-discounted policy may not exist in the real-time setting.
\end{theorem}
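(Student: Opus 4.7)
The plan is to construct a minimal TRM–MDP pair in which $\sup_\pi V^\pi(s_0)$ is well-defined but unattained by any delay-discounted policy. The essential mechanism is that, in the real-time setting, a strict inequality guard such as $x>1$ carves out a relatively open set of feasible delays in $[0,M]$, so a continuous return on that open set can approach its supremum only in a limit.

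Concretely, I would take the two-state MDP $\mathcal{M}$ with $S=\{s_0,s_1\}$, one action $a$, deterministic transition $s_0\xrightarrow{a}s_1$, labelling $L(s_0,a,s_1)=\{p\}$, and $s_1$ terminal. Pair this with the TRM $\mathcal{A}$ that has initial state $u_0$ with constant state-based reward $\Delta^u_r(u_0)\equiv -1$, terminal state $u_1$, a single clock $x$, and exactly two transitions on $\{p\}$: one with guard $x>1$ reaching $u_1$ with transition-reward $10$, and a fallback with guard $x\le 1$ reaching $u_1$ with transition-reward $0$. The two guards partition $[0,\infty)$ with disjoint satisfaction sets, so $\mathcal{A}$ is deterministic as required.

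Next I would compute the return along the only non-trivial trajectory $\zeta_d = s_0\cdot(d,a)\cdot s_1$ for $d\in[0,M]$. Since this is a one-step trajectory, $t_0=0$ and the real-time formula for $G^{\zeta_d}$ reduces to
\[
G^{\zeta_d}\;=\;\begin{cases} 10 \;-\; \dfrac{1-\gamma^{d}}{-\ln\gamma}, & d>0,\\[4pt] 0, & d=0,\end{cases}
\]
where the $d>0$ case fires the strict guard $x>1$ and the $d=0$ case fires the fallback. The map $d\mapsto (1-\gamma^{d})/(-\ln\gamma)$ is strictly positive and strictly increasing on $(0,\infty)$ with limit $0$ as $d\to 0^+$, so for every $d>0$ we have $G^{\zeta_d}<10$ and $\lim_{d\to 0^+}G^{\zeta_d}=10$, while $G^{\zeta_0}=0$.

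Taking suprema gives $\sup_{d\in[0,M]}G^{\zeta_d}=10$, approached along $d_n=1/n\to 0^+$, but realised by no single $d\in[0,M]$: at $d=0$ we get $0<10$, and for $d>0$ the state-cost integral subtracts a strictly positive amount. Hence $\arg\max_\pi V^\pi(s_0)=\varnothing$, proving the claim. The main subtle point is semantic rather than technical: the fallback guard $x\le 1$ must be included so that the TRM run is well-defined at $d=0$, since otherwise non-existence would become a triviality about an infeasible action rather than a genuine counterexample. With the fallback in place, the deterministic partition of the two guards switches active transition precisely at $d=0$, creating the discontinuity that drives the argument. The analogous construction fails in the digital setting because $d\in\mathbb{N}$ forces $\{d+1>1\}=\{d\ge 1\}$ to be closed, eliminating the boundary effect and recovering the positional-optimum guarantee stated for the integer-clock product.
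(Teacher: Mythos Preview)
Your proof is correct and uses essentially the same mechanism as the paper: a strict guard ($x>1$ in your construction, $y>1$ in the paper's Figure~\ref{fig:continuous-example}) carves out an open set of feasible delays on which the continuous return approaches its supremum as $d\to 0^+$ without attaining it, while the fallback guard at the boundary forces a strictly smaller return. Your instance is a cleaner one-clock, one-step reduction of the paper's two-clock, two-step example, but the argument is structurally identical.
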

For instance, in Figure~\ref{fig:continuous-example}, the best sequence $(d,\xright{})(0,\xright{})$, $0<d<1$, yields return $G^\zeta = [5 + \frac{(1-\gamma^d)}{-\ln(\gamma)}(-1)] + \gamma^{(1+d)}[7]$. 
This will achieve a supremum of $11.3$ as $d \to 0^+$, but this is unattainable since $d=0$ violates the guard.



Hence, our focus will be on learning $\varepsilon$-optimal policies.
However, the usual cross-product MDP $\mathcal{M}^\otimes_{rt}$ has infinite state and action spaces.
Towards this, we consider finite and discrete abstractions of real-time, enabling the use of tabular RL.
Moreover, for the purposes of RL, we restrict our attention to \emph{deterministic positional} policies for the defined abstractions.

\paragraph{Uniform Discretization.} A naive approach to approximating real-time is to use a uniform discretization.
This would mean partitioning the time domain using a step size $0<\frac{1}{\kappa}<1$, $\kappa>1\in \mathbb{N}$. In this setting, the clock valuations would be the set $V_\kappa = \{v \in ([0,M] \cup \{\infty\})^{|X|} \mid v[x] = c/\kappa \text{ or } v[x]=\infty \text{ for } c = 0,\ldots, M\cdot \kappa, \text{ for all } x \in X\}$ and the delay action space $\mathbb{D}_\kappa = \{c/\kappa \in [0,M] \mid c = 0,\ldots, M\cdot \kappa\}$.

The cross-product MDP $\mathcal{M}^\otimes$ can then be constructed as in the digital-clock case, except that the action space and state space are defined as above. One can therefore apply the Q-learning algorithm developed in Section~\ref{sec:tabularQ-digital} and achieve similar convergence guarantees on the considered cross-product MDP.

To achieve better approximations of the optimal value using uniform discretization, one would require choosing a larger partition $\kappa$. However, the size of the valuation space $|V_\kappa|$ and $|\mathbb{D}_\kappa|$ grows with $\kappa$, specifically $|V_\kappa| = (M\cdot \kappa + 1)^{|X|}$ and $|\mathbb{D}_\kappa| = M\cdot \kappa + 1$. This leads to a significant increase in the state and action space of the cross-product MDP, hindering the scalability of this approach.

\subsection{Corner-Point Abstraction based on Regions}~\label{sec:corner_abstraction}
To address the challenges of discretizing real-time, we adopt principled abstractions of clock values from the timed automata literature. Specifically, we adapt the corner-point abstraction based on region abstraction~\cite{DBLP:conf/rtss/AlurCDHW92}. 
While previously used for priced timed automata~\cite{DBLP:conf/hybrid/BouyerBL04}, we adapt it to an RL setting, interpreting TRMs over MDPs with discounted rewards.

On an intuitive level, regions partition the infinite set of clock values into finitely many equivalence classes that behave identically w.r.t. guards. Region corners, on the other hand, are the integral boundary points of a region. Typically, an RL agent is incentivized to choose delays near region corners to obtain higher returns (as in the example from Figure~\ref{fig:continuous-example}).

To formally introduce the corner-point abstraction, we briefly recall the region abstraction and then define its corner points.
Given a set of clocks $X$ and a max-constant $M$, a \emph{region} is a tuple $(h, [X_0, \ldots, X_p])$, where $h:X\to \{0,\ldots,M\}$, and $(X_i)_{i=0}^p$ is a partition of $X$ such that for all $i>0$, $X_i \neq \emptyset$ and $h(x)=M$ implies $x\in X_0$. A valuation $v$ is in a region if the following conditions hold:
\begin{itemize}
\item for all $x\in X$, $\lfloor v(x) \rfloor = h(x)$,
\item for all $x\in X$, $x\in X_0$ iff $\{v(x)\} = 0$ (i.e., $v(x) = h(x)$), and
\item for all $x,y \in X$, $\{v(x)\} \leq \{v(y)\}$ iff $x\in X_i$, $y\in X_j$, $i\leq j$,
\end{itemize}
where $\lfloor c \rfloor$ and $\{c\}$ denote the integer and fractional parts of $c$, respectively.
For example, the valuation $v$ with $v(x)=1.2$, $v(y)=0.5$ lies in the region $\region=(\{x:1, y:0\}, [\{\},\{x\},\{y\}])$.

Two valuations $v, \overline{v}$ are \emph{region-equivalent}, denoted $v\sim \overline{v}$, if they belong to the same region. For a valuation $v$, $[v]$ denotes the region to which it belongs.
Region-equivalence can be naturally extended to trajectories and policies.


A corner point of a region $\region$ is a valuation $v\in \{0,\ldots,M\}^{|X|}$ with integral values for each clock and belongs to the (topological) closure of $\region$, e.g., the corner points of the example region above are $(1,0)$, $(1,1)$, and $(2,1)$.

We here exploit region corners to search for policies in $\mathcal{M}^\otimes_{rt}$ that yield higher rewards, namely \emph{corner policies}. Intuitively, such policies choose delays so that the clock valuations encountered along each possible trajectory lie at region corners.

To formalize the notion of corner policies, we first introduce some notation.
For a precision $\delta>0$, we define the $\delta$-corners of a region $\region$ to be $\corner_\delta(\region) = \{v \in \region \mid \forall x, v(x) \in (c - \delta, c + \delta) \text{ where } c \text{ is a corner of }\region\}$ the valuations close to its corners. 

Given a trajectory $\zeta = s_0\cdot (d_0, a_0)\cdots (d_{n},a_{n}) \cdot s_{n+1}$, we define its \emph{region-equivalent trajectories} $[\zeta]$ to be the set of all trajectories $\hat{\zeta}$ of the form $s_0\cdot (\hat{d}_0, a_0)\cdots (\hat{d}_{n},a_{n}) \cdot s_{n+1}$ such that for all $i\in\{0,n+1\}$, $v_i\sim \hat{v}_i$,  where $v_i$ and $\hat{v}_i$  are the valuations appearing in $\mathcal{A}^{\zeta}$ and $\mathcal{A}^{\hat{\zeta}}$. 
Given a trajectory $\zeta$ and a precision $\delta$, we define \emph{region-equivalent corner trajectories} of $\zeta$, denoted by $\corner_\delta(\zeta)$, as the set of trajectories $\hat{\zeta}$ such that $\hat{\zeta} \in [\zeta]$ and $\hat{v}_i \in \corner_\delta([v_i])$ for all valuations $v_i$ and $\hat{v}_i$ appearing in the runs $\mathcal{A}^{\zeta}$ and $\mathcal{A}^{\hat{\zeta}}$, respectively.

The following lemma states that for any trajectory $\zeta$, there exists a region-equivalent corner trajectory $\hat{\zeta}$ such that the discounted return $G^{\hat{\zeta}}$ is close to $G^\zeta$ for some high discounting factor $\gamma<1$.
\begin{restatable}{lemma}{trajectorycorner}
\label{lem:corner_trajectory}
    Given any $\varepsilon > 0$ and any trajectory $\zeta$, there exists $\delta>0$ and a region-equivalent corner trajectory $\hat{\zeta} \in \corner_\delta(\zeta)$ such that $G^{\hat{\zeta}} > G^\zeta - \varepsilon$ for some discounting factor $\gamma<1$.
\end{restatable}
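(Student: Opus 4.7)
The argument proceeds by induction on the position $i$ of the trajectory, combined with a local monotonicity claim for the discounted return as a function of a single delay.

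\textbf{Step 1: region equivalence preserves the TRM run.} Every guard in $\Phi(X)$ takes the same truth value on region-equivalent valuations, so any perturbation of the delays that keeps $v_i + d_i + 1$ in its current region preserves the sequence of TRM states $u_i$, transitions $\theta_i$, and resets $\rho_i$. In particular, the transition rewards $r^\theta_i$ and the state-reward coefficients $\Delta^u_r(u_i)$ are invariants across the region-equivalence class, and \Cref{lem:traj_delay_bound} guarantees that the same run structure applies to bounded perturbations.

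\textbf{Step 2: local monotonicity in a single delay.} Fix all $d_j$ with $j \neq i$, and let $I$ be the maximal open interval in which $v_i + d_i + 1$ stays in one region. By Step~1 and the fact that $G^\zeta_{i+1}$ depends only on $d_{i+1}, \ldots, d_n$ and the (invariant) tail structure, $G^\zeta_{i+1}$ is constant in $d_i$ on $I$. The $d_i$-dependent part of the return is therefore
\begin{equation*}
f(d_i) \;=\; r^\theta_i \;+\; \frac{1-\gamma^{d_i}}{-\ln\gamma}\,\Delta^u_r(u_i) \;+\; \gamma^{d_i+1}\,G^\zeta_{i+1},
\end{equation*}
with derivative
\begin{equation*}
f'(d_i) \;=\; \gamma^{d_i}\!\left[\Delta^u_r(u_i) \;+\; \gamma\ln\gamma \cdot G^\zeta_{i+1}\right],
\end{equation*}
which has constant sign on $I$. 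Hence $f$ is monotone on $I$ and its supremum lies in the closure of the region, where $v_i + d_i + 1$ reaches a boundary; applying $\rho_i$ then snaps the reset clocks to the integer value $0$, so the limiting valuation $\hat v_{i+1}$ lies at a corner of $[v_{i+1}]$.

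\textbf{Step 3: inductive construction.} Induct on $i$ from $0$ to $n$. At each step, apply Step~2 to perturb $d_i$ towards the monotonicity-preferred boundary of $[v_i + d_i + 1]$, choosing a delay that places $\hat v_{i+1}$ within $\varepsilon$ of the corresponding corner of $[v_{i+1}]$. Monotonicity guarantees that this substitution does not decrease the return, and the inductive hypothesis preserves the corner alignments established at earlier positions.

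\textbf{Main obstacle.} A single delay shift advances all non-reset clocks in lockstep, so clocks with misaligned fractional parts cannot simultaneously be driven to integer values by one perturbation. The proof must therefore exploit both the reset structure of the TRM (which zeroes selected clocks, eliminating one source of misalignment) and the sequential nature of the induction (so that adjustments compound across positions). Budgeting the $\varepsilon$-slack across positions so that later perturbations do not undo earlier corner placements, while verifying that every adjustment remains inside the region-equivalence class, is the main technical bookkeeping.
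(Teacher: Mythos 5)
Your overall strategy---an induction over positions combined with a sign/monotonicity argument that pushes a single timing parameter to a region boundary---is essentially the paper's, but your perturbation is different. The paper varies the $i$-th decision \emph{time} $t_i$ while holding $t_{i-1}$ and $t_{i+1}$ fixed (so $d_{i-1}$ and $d_i$ change in a compensating way); the return difference then localizes to two consecutive terms and factors as $(\gamma^{t'_i}-\gamma^{t_i})\cdot K$ for a constant $K$, and the sign of $K$ dictates the direction of the push. You instead vary the single delay $d_i$ with all later delays fixed, which translates the whole tail of the trajectory in time; your derivative computation is correct, and the invariance of $G^\zeta_{i+1}$ does hold under a uniform time translation---\emph{provided} the downstream region sequence is preserved. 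That proviso is where your Step~1 is too weak: under your perturbation, every clock last reset at or before position $i$ shifts at every later read point, so keeping $v_i+d_i+1$ in its region does not preserve the run by itself. The interval $I$ must be intersected with region-preservation constraints at all later positions, and the binding endpoint of $I$ then need not be a boundary of $[v_{i+1}]$ at all. The paper's compensating perturbation largely sidesteps this, since only clocks reset exactly at position $i$ are affected downstream; this is what the localized two-term difference formula buys.

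The second concrete problem is the inference from ``the supremum of $f$ lies at a region boundary'' to ``$\hat v_{i+1}$ lies at a corner.'' A boundary of a region is reached when \emph{one} clock's fractional part hits an integer; a corner requires \emph{all} of them to, and a single uniform shift cannot achieve that when the fractional parts are spread out. You correctly name this in your ``main obstacle'' paragraph but do not resolve it, and resolving it is what makes the induction close: by the inductive hypothesis $\hat v_i \in \corner_\varepsilon([v_i])$, so all non-reset clocks entering position $i+1$ have fractional parts within $O(\varepsilon)$ of one another (reset clocks sit exactly at $0$), and hence one adjustment of the delay lands all of them in $\corner_\varepsilon([v_{i+1}])$ simultaneously. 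Without this clustering observation your construction yields a trajectory with only one integral-valued clock per position, not a member of $\corner_\varepsilon(\zeta)$. (The paper's own write-up is equally terse on this point, but since you identified it as the obstacle, it is the piece you still owe.)
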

\begin{proof}
The proof proceeds, intuitively, by first showing that for any trajectory $\zeta$, the delay-discounted return $G^\zeta$ of the trajectory can be closely approximated by an ``undiscounted'' return $L$ of the trajectory, i.e., $|G^{{\zeta}} - L^\zeta| < \varepsilon/3$ for any precision $\varepsilon > 0$.
Then, we show that for a corner-trajectory $\hat{\zeta}$
$L^{\hat{\zeta}}\geq L^\zeta - \varepsilon/3$, implying that $G^{\hat{\zeta}} > L^{\hat{\zeta}} - \varepsilon/3 \geq L^\zeta - 2\varepsilon/3 > G^\zeta - \varepsilon$.
We formalize this now.

First note that, since each delay is bounded by $M$, each  decision point $t_i \leq nM = T$ for some $T>0$.
For simplicity of calculation of the discounted return, we rewrite the discounting factor $\gamma$ as $e^{-\lambda}$, i.e., $\lambda=-\ln \gamma$, where $\lambda > 0$. 
Therefore, we can write $G_{\lambda}^\zeta$ as follows:
\begin{align}
G_{\lambda}^\zeta
&=
\sum_{i=0}^n
\left[
\gamma^{t_i} \cdot \Delta_i^\theta 
+
\gamma^{t_i} \cdot
\frac{1-\gamma^{d_i}}{-\ln{\gamma}} \cdot \Delta_i^u \right]\\&=
\sum_{i=0}^n
\left[
e^{-\lambda t_i} \cdot \Delta_i^\theta 
+
\frac{e^{-\lambda t_i}-e^{-\lambda (t_{i+1}-1)}}{\lambda} \cdot \Delta_i^u \right]
\\&=
\sum_{i=0}^n
\left[
\bigl(1-\lambda t_i+R_\lambda(t_i)\bigr) \cdot \Delta_i^\theta 
+
\frac{\lambda\bigl((t_{i+1}-1)-t_i\bigr)
+
R_\lambda(t_i)-R_\lambda(t_{i+1}-1)}{\lambda} \cdot \Delta_i^u \right]
\\&=\sum_{i=0}^n
\left[
[
  (\Delta_i^\theta - \Delta_i^u) - \lambda t_i \cdot (\Delta_i^\theta + \Delta_i^u) + t_{i+1} \cdot \Delta_i^u
]
+
[
R_\lambda(t_i) \cdot \Delta_i^\theta
+
\frac{
R_\lambda(t_i)-R_\lambda(t_{i+1}-1)}{\lambda} \cdot \Delta_i^u
] 
\right]
\label{eq:lambdaform}
\end{align}
In step (2), we rewrite using $\gamma = e^{-\lambda}$ and $t_{i+1} = d_i + t_{i} + 1$.
In step (3), we rewrite $e^{-\lambda t}$ as $1-\lambda t+R_\lambda(t)$, using Taylor's expansion, to get the expression involving error terms $R_\lambda(t)$.
In step (4), we simply rearrange the terms.

The above equation shows us that $G_{\lambda}^\zeta$  can be written as $L^{\zeta}_\lambda + E^{\zeta}_\lambda$, where $L^\zeta_\lambda$ is an affine function in $t_i$'s for a fixed $\lambda$ and $E_\lambda^{\zeta}$ is the term that depends on $R_\lambda(t_i)$.

Now, for any $t\leq T$, we can bound the value $R_\lambda(t)$ as: \(R_\lambda(t)=\frac{\lambda^2 t^2}{2}e^{-\lambda \tau}\leq \frac{\lambda^2 T^2}{2} \) for some $0<\tau<t$ when $\lambda > 0$. Therefore, we can also upper-bound the affine function involving $R_\lambda(t_i)$, $|E^{\zeta}_\lambda|<\eta(\lambda)$, using triangle inequality.
Therefore, $|G_{\lambda}^\zeta-L^{\zeta}_\lambda| = |E_\lambda^{\zeta}| \leq \eta(\lambda)$ for any trajectory $\zeta$. We can now simply choose a $\lambda$ such that $\eta(\lambda) < \varepsilon/3$ and fix it.

Now $L^\zeta_\lambda$ is an affine function in $t_i$'s and therefore achieves its supremum at corner points, say whose value is $L^{*}_\lambda$.
Therefore, for a region-corner trajectory
$\hat{\zeta}\in \corner_\delta(\zeta)$, 
we have 
$L^\zeta_\lambda \leq L^{\zeta^*}_\lambda\leq L^{\hat{\zeta}}_\lambda + \delta \cdot K \cdot H,
$
where $H$ is the maximum horizon of the trajectory, and $K\cdot\delta$
bounds the value difference incurred by considering the decision time to be at corners.
One can then choose $\delta$ sufficiently small such that
$\delta \cdot K \cdot H < \varepsilon/3$.
\end{proof}


We now extend the definition of ``region-equivalent corner sets'' from trajectories to policies.
Let $\pi$ and $\hat{\pi}$ be deterministic positional policies such that, for each state $(s,u,v)$ in the cross product MDP, $\pi(s,u,v) = (d,a)$ and  $\hat{\pi}(s,u,v) = (d',a)$,
i.e., both prescribe the same discrete action $a$ but possibly different delays $d$ and $d'$.
We say that $\hat{\pi} \in \corner_\delta(\pi)$ is a \emph{region-equivalent corner policy} of $\pi$ if for all $(s,u,v)$, $d' = d \text{ if } v \notin \corner_\delta([v])$, and $v + d' \in \corner_\delta([v + d])$ otherwise. 

We can extend the previous lemma for the following result:

\begin{restatable}{lemma}{policycorner}
\label{lem:corner_policy}
    Given any $\varepsilon>0$ and any delay-discounted policy $\pi$, there exists $\delta$ and a region-equivalent corner policy $\hat{\pi}\in C_\delta(\pi)$ such that  
    $V^{\hat{\pi}} > V^\pi - \varepsilon$ for some discounting factor $\gamma < 1$.
\end{restatable}

\begin{proof}
    Since we have fixed a horizon, we can also perform a similar proof of approximating policy $\hat{\pi}$ with $\pi$ as the previous result. Here, the proof proceeds by considering the induced finite-horizon Markov chain $\mdp_\pi$ of the policy. Following a similar calculation, we now rewrite the expected value of the discounted reward $G^{\zeta}_\lambda$ as an affine part and an error term: 
\begin{align*}
    \sum_{\zeta\in \mdp_{\pi}} Pr_{\zeta\sim\mdp_\pi} [G^{\zeta}_\lambda] = 
    \sum_{\zeta\in \mdp_{\pi}} Pr_{\zeta\sim\mdp_\pi}([L^{\zeta}_\lambda+E^{\zeta}_\lambda]) 
\end{align*}
Again in the above equation $Pr_{\zeta\sim\mdp_\pi}[L^{\zeta}_\lambda]$ is an affine function, achieving maximum at corner points and $Pr_{\zeta\sim\mdp_\pi}[E^{\zeta}_\lambda]$ is a bounded term, since the horizon is bounded.
\end{proof}

We can therefore focus our attention to only region-equivalent corner policies. Although it significantly reduces the search space, an infinite number of such policies can still exist. To address this, we introduce a `finite' abstraction of the cross-product MDP, where valuations correspond to the corners of the regions. We then show that this finite MDP can provide us a near-optimal approximation of the original real-time MDP.

\paragraph{Cross-product using Corner-Point Abstraction.}
We first describe how a corner configuration $(\region,\alpha)$ of the corner-abstraction of $\mathcal{A}$, where $\region$ is a region and $\alpha$ is its corner point, evolves under elapsing time.
Here, the agent, in addition to a delay $d\in \mathbb{D} = \{0,\ldots,M\}$,  chooses a region successor $\sigma\in\mathbb{S}=\{-2|X|,\ldots,0,\ldots, 2|X|\}$ that assigns which region to move to associated with a corner\footnote{Some successors may be invalid or not distinct for some $(\region,\alpha)$.}.
Intuitively, applying a delay-successor tuple $(d,\sigma)$ to a configuration $(\region,\alpha)$ leads to a new configuration $(\region',\alpha')$ obtained as follows: first shift both $\region$ and $\alpha$ by $d$ time unit, and then choose the $\sigma^{th}$ successor region associated with that corner.
Formally, $(\region,\alpha)\oplus (d,\sigma)$ is the new configuration $(\region',\alpha')$ defined as: $\alpha' = \alpha+d$, $\region''[h]=\region[h]+d$ and $R'$ is the $\sigma^{th}$ successor region of $R''$ associated with $\alpha'$.

\begin{example}\label{ex:region-corner}
Consider a corner configuration $(\region=(\{x:1, y:0\}, [\{x\},\{y\}]), \alpha=(1,0))$. This region contains valuations such as $v(x)=1$, $v(y)=0.1$. Applying delay $(1,0)$ leads to $(\region_1=(\{x:2, y:1\}, [\{x\},\{y\}]), \alpha_1=(2,1))$, which is the same region and corner pair offset by $+1$. This region contains valuations such as $v(x)=2$, $v(y)=1.1$. Alternatively, applying delay $(1,1)$ leads to $(\region_2=(\{x:2, y:1\}, [\{\},\{x\},\{y\}]), \alpha_1=(2,1))$, which is the region successor of $R_1$ associated with the same corner. This region contains valuations such as $v'(x)=2.1$, $v'(y)=1.2$. We illustrate this example in Figure~\ref{ex:region-corner}.
\end{example}

\begin{figure}
\centering
\begin{tikzpicture}[scale=1.4, >=stealth]

  \def\M{3}

  \foreach \i in {1,2,3} {
    \draw[dotted, thick, gray!70] (\i,0) -- (\i,\M);
    \draw[dotted, thick, gray!70] (0,\i) -- (\M,\i);
  }


  \foreach \i in {0,1,2} {
    \foreach \j in {0,1,2} {
      \draw[thick] (\i,\j) -- ({\i+1},{\j+1});
    }
  }

  \draw[->, thick] (0,0) -- (3.35,0) node[right] {$x$};
  \draw[->, thick] (0,0) -- (0,3.35) node[above] {$y$};

  \node[below left] at (0,0) {$0$};
  \foreach \i in {1,2,3} {
    \node[below] at (\i,0) {$\i$};
    \node[left] at (0,\i) {$\i$};
  }
\fill (1,0.1) circle (1.5pt); 
\node at (0.85,0.1) {\scriptsize $a$};
  \node at (1, 0.5) {\scriptsize $R$}; 
  \fill (2,1.1) circle (1.5pt);
  \node at (1.92,1.02) {\scriptsize $b$};
    \node at (2,1.5){\scriptsize $R_1$};
    \fill (2.1,1.2) circle (1.5pt);
     \node at (2.18,1.28) {\scriptsize $c$};
    \node[above right] at (2.1,1.5) {\scriptsize $R_2$};

\draw[decorate,decoration={brace, amplitude=5pt}] (1,0.15) -- (2,1.15) node[midway, xshift=-7pt, yshift=9pt] {\scriptsize $(1,0)$};
\draw[decorate, decoration={brace, amplitude=7pt}] (2.13,1.16) -- (1.05,0.08)  node[midway, xshift=8pt, yshift=-10pt] {\scriptsize $(1,1)$};
\end{tikzpicture}
\caption{Graphical illustration of Example~\ref{ex:region-corner}. Here, corner $a$ represents the configuration $(R,\alpha)$, corner $b$ represents the configuration $(R_1,\alpha_1)$ and corner $c$ represents the configuration $(R_2,\alpha_1)$. To corner $a$, applying a delay successor of $(1,0)$ leads to corner $b$, while applying a delay successor of $(1,1)$ leads to corner $c$.} 
\end{figure}

We then define the cross-product MDP $\mathcal{M}^\otimes = (S^\otimes, A^\otimes, T^\otimes, R^\otimes)$ as follows:
$S^\otimes = S \times U \times \regionset \times \cornerset$, 
where $\regionset$ is the set of regions of $\mathcal{A}$, and $\cornerset$ is the set of corner points associated with the regions;
$A^\otimes = \mathbb{D}\times \mathbb{S}\times A$, where $\mathbb{D} = \{0,1,\ldots,M\}$ 
and $\mathbb{S} = \{-2|X|,\ldots,0,\ldots,2|X|\}$; 
and $T^\otimes: S^\otimes \times A^\otimes \times S^\otimes \rightarrow [0, 1]$ and  $R^\otimes: S^\otimes \times A^\otimes \times S^\otimes \rightarrow \mathbb{R}$  are 
defined as follows:
\begin{align*}
& T^\otimes((s, u, R, \alpha), (d, \sigma, a), (s', u', R', \alpha')) \!=\! T(s, a, s'), \text{and} \\
& R^\otimes((s, u, R, \alpha), (d,\sigma, a), (s', u', R', \alpha')) \!=\! r^u(s) \!+\!  r^\theta(s,a,s'), \\ 
& \text{if } \exists \theta = (u,L(s,a),\phi,\rho,u') \text{ and } R'', \text{ s.t. }
R''\models \phi, 
 \text{and } R' =  [\rho](R''),\\ &\text{ where }
 (R'', \alpha'') = (R, \alpha) \oplus (d+1, \sigma), R''\models \phi,
\end{align*}
and $r^u = \frac{1-\gamma^d}{-\ln(\gamma)} \Delta^u_r(u)$ and
$r^\theta = \Delta^\theta_r(\theta)$.

The above operations on regions such as successor and reset are well-defined and can be computed efficiently~\cite{DBLP:conf/rtss/AlurCDHW92}.

We now show results that demonstrate that returns from a corner policy in real-time MDP $\mathcal{M}^{\otimes}_{rt}$ can be approximated by a policy in corner-point abstraction MDP  $\mathcal{M}^{\otimes}_{ca}$ and vice-versa.
To this end, we first show the following result:
\begin{lemma}
\label{lem:corner_abstraction}
    For any corner policy $\pi$ in $\mathcal{M}^{\otimes}_{rt}$, there exists $\mu>0$ and a policy $\pi'$ in the corner-point abstraction MDP $\mathcal{M}^{\otimes}_{ca}$ such that $|V^\pi - V^{\pi'}| <\mu$.
\end{lemma}
\begin{proof}
    Fix a corner policy $\pi$ in $\mathcal{M}^{\otimes}_{rt}$.
  We construct a corresponding policy $\pi'$ in the corner-point abstraction
  $\mathcal{M}^{\otimes}_{ca}$ as follows.
  For every cross-product state $(s,u,v)$ such that $v \in \corner_{\delta}([v])$
  and $\pi((s,u,v)) = (d,a)$, define
  \[
    \pi'((s,u,[v],\alpha)) = (d',\sigma,a),
  \]
  where $\alpha$ is the closest corner w.r.t. $v$, and  the parameters $(d',\sigma)$ satisfy:
  \begin{enumerate}
    \item $v + d \in \corner_{\delta}(\region')$ where
          $(\region',\alpha') = ([v],\alpha) \oplus (d',\sigma)$;
    \item $\alpha' = \alpha + d'$ is the closest corner point of $[v + d]$, i.e., $v+d$ and $v+d'$ 
          correspond to the same corner of their respective regions.
  \end{enumerate}

  By construction, the offsets between the concrete and abstract valuations
  are bounded: $|\alpha - v| \le \delta$, and $|\alpha' - (v+d)| \le \delta$. Hence the induced delay shift satisfies $|d' - d| \le 2\delta$.
  Because rewards in $\mathcal{M}^{\otimes}_{rt}$ are continuous with respect to delay and
  clock valuations inside each region (the guards and resets unchanged),
  the difference in immediate reward between $\pi$ and $\pi'$ at any step is
  bounded by a Lipschitz-continuous function $f(\delta)$ satisfying
  $f(\delta)\to 0$ as $\delta \to 0$.

  The discount factor does not amplify this bound since
  $0 < \gamma^k \le 1$ for all $k$.
  Since we bound the horizon $H$, the total cumulative
  reward discrepancy can be bounded as follows:
  $|V^{\pi} - V^{\pi'}| \le H \cdot f(\delta) = \mu$ 
\end{proof}

We now describe the ``lifting'' process that produces a corner policy $\pi$ in $\mathcal{M}^{\otimes}_{rt}$ from a policy $\pi'$ in $\mathcal{M}^{\otimes}_{ca}$. 
Towards this, fix a policy $\pi'$ and a state in $M^{\otimes}_{ca}$ of the form $(s,u,R,\alpha)$ with $\pi'((s,u,R,\alpha)) = (d',\sigma,a)$ such that $(R',\alpha') = (R, \alpha) \oplus (d',\sigma)$. Then, for all $v \in C_\delta(R)$ with $\alpha$ being the closest corner point of $R$ w.r.t. $v$, we choose $d \in \mathbb{D}$ such that, 
\[
    \pi((s,u,v)) = (d,a),
\]
where $(s,u,v)$ is a state in $\mathcal{M}^{\otimes}_{rt}$ and $(v+d) \in C_{\delta}(R')$ and $\alpha'$ is the closest corner point of $R'$ w.r.t. $v+d$.

The following lemma now shows that the described lifting process produces a corner policy in $\mathcal{M}^{\otimes}_{rt}$ with a value close to the policy in $\mathcal{M}^{\otimes}_{ca}$.
\begin{lemma}\label{lem:lifting}
Given any $\mu>0$ and any policy $\pi'$ in $\mathcal{M}^{\otimes}_{ca}$, the above lifting process returns a corner policy $\pi$ in $\mathcal{M}^{\otimes}_{rt}$ such that $|V^\pi - V^{\pi'}| <\mu$.
\end{lemma}
\begin{proof}
  Fix a state $(s,u,v)$ in $M^{\otimes}_{ca}$ and say that $\pi((s,u,v)) = (d,a)$ which is lifted from $\pi'((s,u,[v],\alpha)) = ((d',\sigma),a)$, where $\alpha$ is the closest corner w.r.t. $v$.

Similar to the proof in Lemma~\ref{lem:corner_abstraction}, we can show that $|d' - d| \le 2\delta$. Again, due to the continuity of rewards within the considered region, we can choose $\delta$ such that:  
 \[
    |V^{\pi} - V^{\pi'}| \le H \cdot f(\delta) < \mu.
    \qedhere
\]
\end{proof}

\begin{theorem}
\label{thm:eps-optimality}
Given any $\varepsilon > 0$, our algorithm via Q-learning  on the corner-point abstraction $\mathcal{M}^{\otimes}_{ca}$  returns an $\varepsilon$-optimal policy $\pi^*$ for the real-time MDP $\mathcal{M}^{\otimes}_{rt}$ for some discounting factor $\gamma< 1$.
\end{theorem}
\begin{proof}
Let $\pi$ be an $\varepsilon$-optimal corner policy for $\mathcal{M}^{\otimes}_{rt}$; the existence of such an $\varepsilon$-optimal corner policy, for some $\delta>0$, is guaranteed by Lemma~\ref{lem:corner_policy} for a sufficiently high discount factor. Moreover, by Lemma~\ref{lem:corner_abstraction}, there exists a policy $\pi'$ for $\mathcal{M}^\otimes_{ca}$ such that $|V^\pi - V^{\pi'}| < \mu$, for some $\mu>0$ depending on $\delta$. Hence, the optimal policy $\pi^*$ for $\mathcal{M}^\otimes_{ca}$ found by Q-learning satisfies $V^{\pi^*} \geq V^{\pi'} > V^\pi - \mu$.
Furthermore, by Lemma~\ref{lem:lifting}, the policy $\pi''$ for $\mathcal{M}^{\otimes}_{rt}$ obtained by lifting $\pi^*$ satisfies
$|V^{\pi''} - V^{\pi^*}| < \mu$.
Therefore, by the triangle inequality, we obtain
$V^{\pi''} > V^\pi - 2\mu$.
Choosing $\delta$ sufficiently small so that $2\mu < \varepsilon$, it follows that $\pi''$, obtained by our algorithm, is an $\varepsilon$-optimal policy for $\mathcal{M}^{\otimes}_{rt}$.
\end{proof}

We also design counterfactual imagining for the corner abstraction, analogous to the digital-time setting (Section~\ref{sec:crm-delays}). By contrast, here we synthesize alternative corner configurations $(\bar{\region},\bar{c})$ within a bounded radius $r_{\mathrm{crm}}$ of the realised configuration $(\region,c)$. Moreover, we add alternative delay and successor actions whenever the resulting configuration $(\bar{\region},\bar{c}) \oplus (\bar{d},\bar{\sigma})$ satisfies the relevant guards.

\section{Evaluation}
\label{sec:experiments}

All the described algorithms were implemented in Python3\footnote{available at \url{https://github.com/ritamraha/Timed-Reward-Machines}} by extending~\cite{toroicarte_reward_machines}. We developed the timing extensions, including region and corner abstractions, for reward machines from scratch.

To improve learning performance, we employed several heuristics for interpreting TRMs.
First, to reduce clock-valuation space $V$, we assigned clock-specific maximum constants $M_x$ for $x \in X$, a standard optimisation in timed automata.
Second, to reduce delay-space $\mathbb{D}$, we set the maximum delay $M_d$ to the largest constant appearing in guards of the form $x \bowtie c$ with $\bowtie \in \{>, \geq, =\}$. This is not a restriction, as delays larger than $M_d$ only incur additional costs and therefore do not need to be considered for optimal policies.

\subsection{Experimental Results.} 

We address three key research questions here: 
\begin{description}
    \item[RQ1] the performance gains by implementing counterfactual imagining,
    \item[RQ2] the performance difference between the different timing abstractions, and
    \item[RQ3] the scale achieved by performing model-free RL.
\end{description}
As TRMs are a novel contribution to the RL framework, there are no direct baselines for comparison. We discuss the technical difference with other formalisms in Section~\ref{sec:technical-comparison-related-works}.

For our evaluation, we use standard Gym~\cite{towers2024gymnasium} environments: (i) the \emph{Taxi} domain (Figure~\ref{fig:taxi-domain}), with propositions indicating colored pick-up locations and whether the passenger is in the taxi or at the destination; (ii) \emph{Frozen Lake} (Figure~\ref{fig:frozenlake-domain}), augmented with three goals \((a,b,c)\) and ten holes \((h)\), action success probability \(0.8\).
The main TRMs used in the experiments are shown in Figure~\ref{fig:TRMs}. The specifications encoded by these TRMs are described in the captions of these figures. We also provide a detailed description of these TRMs.


We use Q-learning with per-episode parameter decay \(0.999\), initial rate \(\alpha_0=0.9\), initial exploration \(\varepsilon_0=0.9\), initial Q-values \(Q_0=10\), $\gamma=0.999$ and maximum global steps of 300~K. For counterfactual imagining (CI), we select the top \(15\) by rewards per transition. We averaged the results of each experiment over 10 independent runs.


\subsubsection*{RQ1: Performance gain for Counterfactual Reasoning}
To analyze the improvement of counterfactual imagining (CI) specific to time, we only choose alternative clock valuations and delays (and not TRM states) for both the digital and the real-time settings. We demonstrate this comparison on the Taxi domain with TRM1 (Figure~\ref{fig:trm1}), and on the frozen lake with TRM2 (Figure~\ref{fig:trm2}). TRM1 requires the Taxi agent to pick up a passenger, visit a green location, and drop them at the destination, while satisfying several timing constraints. On the other hand, TRM2 requires the Frozen Lake agent to satisfy three objectives, $a, b, c$, sequentially, while avoiding falling into the holes, and it must also move slowly.

Figure~\ref{fig:plot-crm_comparison} compares discounted returns and episode time (including delays) during Q-learning on both environments. Counterfactuals yield significantly higher returns in both digital and corner-point abstractions by enabling exploration of additional ways to satisfy timing constraints. They also significantly reduce episode time, allowing agents to complete tasks faster.

\subsection*{RQ2: Comparison of Timing Abstractions}
We evaluate the performance of different cross-products for different time interpretations: (i) digital clock abstraction, (ii) uniform discretization with $1/\kappa \in \{0.2,0.5\}$, (iii) corner-point abstraction, and (iv) reward machines. Note that the reward machine interpretation cannot choose delay actions, as it is not designed for timed specifications.
We demonstrate this comparison on the Taxi domain with TRM3 (Figure~\ref{fig:trm3}), and on the frozen lake with TRM4 (Figure~\ref{fig:trm4}).
The TRMs are similar to the previous experiment, with different time constraints.
We provide another TRM in Figure~\ref{TRM:cont-is-better}, which shows a similar but more pronounced comparison between digital clock abstraction and corner-point abstraction in Figure~\ref{plot:cont-is-better}.

\begin{figure}[t]
    \centering
    \begin{subfigure}[b]{0.48\textwidth}
        \centering
        \includegraphics[width=\linewidth]{data/taxi-domain.png}
        \caption{Taxi domain}
        \label{fig:taxi-domain}
    \end{subfigure}
    \hfill
    \begin{subfigure}[b]{0.48\textwidth}
        \centering
        \includegraphics[width=0.8\linewidth]{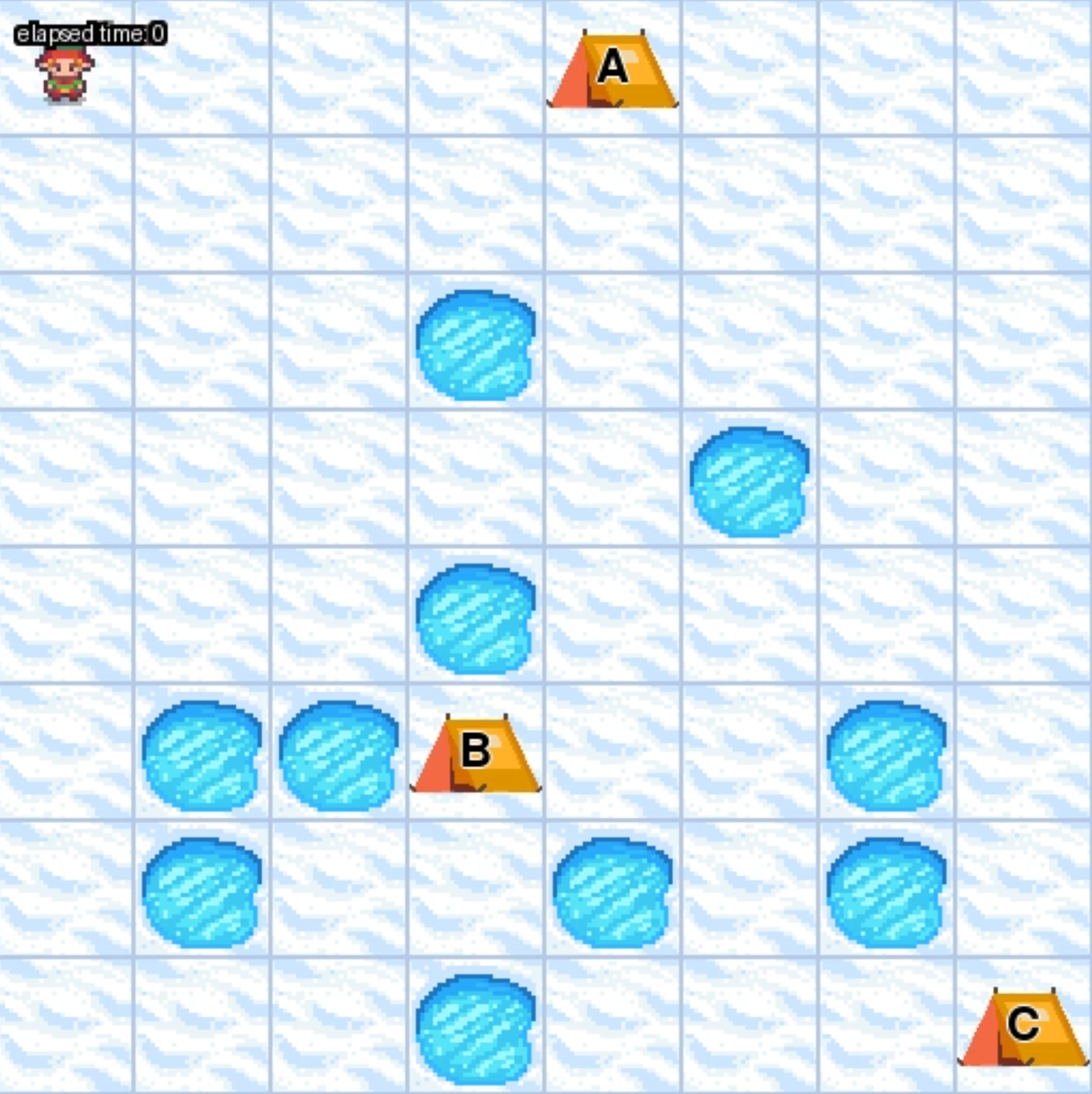}
        \caption{Frozen Lake}
        \label{fig:frozenlake-domain}
    \end{subfigure}
    \caption{Gym environments used in experiments.}
    \label{fig:gym-environments}
\end{figure}

\begin{figure*}[ht]
\begin{subfigure}[b]{0.5\linewidth}
  \centering
    \begin{tikzpicture}
  \node[anchor=south west, inner sep=0] (A) at (-1,0) {\includegraphics[width=.5\linewidth]{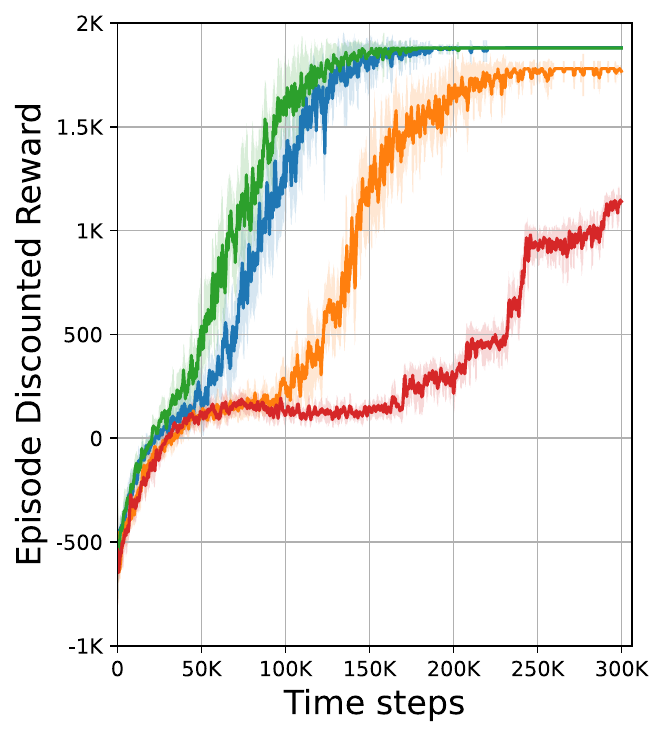}};
  \node[anchor=south west, inner sep=0] (B) at (3,0){\includegraphics[width=.5\linewidth]{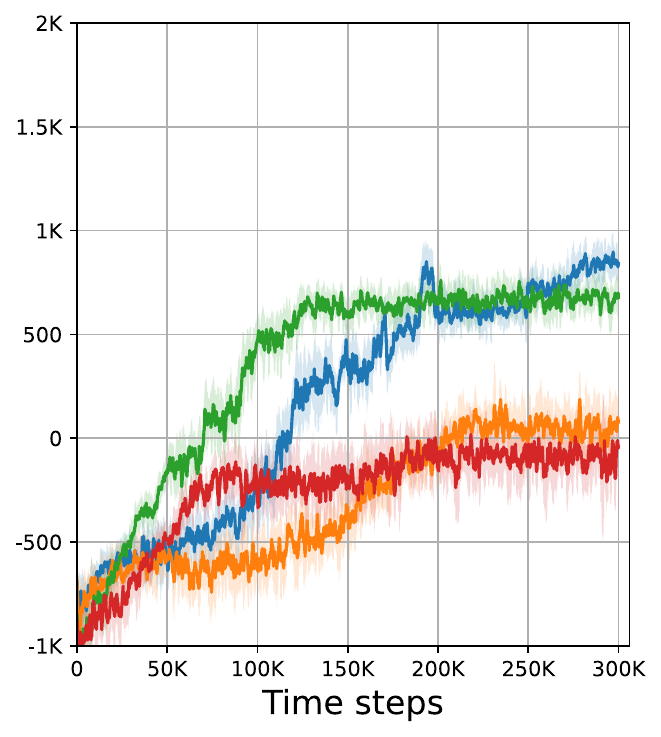}};
  \node[inner sep=0] (C) at (7.1,5){\includegraphics[width=1.7\linewidth]{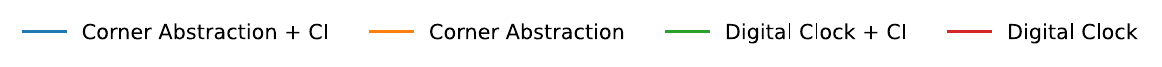}};
  \node[xshift=2.5mm] at ($(A.north)$) {\scriptsize Taxi Domain on TRM1};
  \node[xshift=1mm] at ($(B.north)$) {\scriptsize Frozen Lake on TRM2};
\end{tikzpicture}
\caption{Discounted reward comparison}
\label{fig:plot-crm_comparison-environments}
\end{subfigure}
\begin{subfigure}[b]{0.5\linewidth}
  \centering
    \begin{tikzpicture}
  \node[anchor=south west, inner sep=0] (B) at (3.825,0){\includegraphics[width=.5\linewidth]{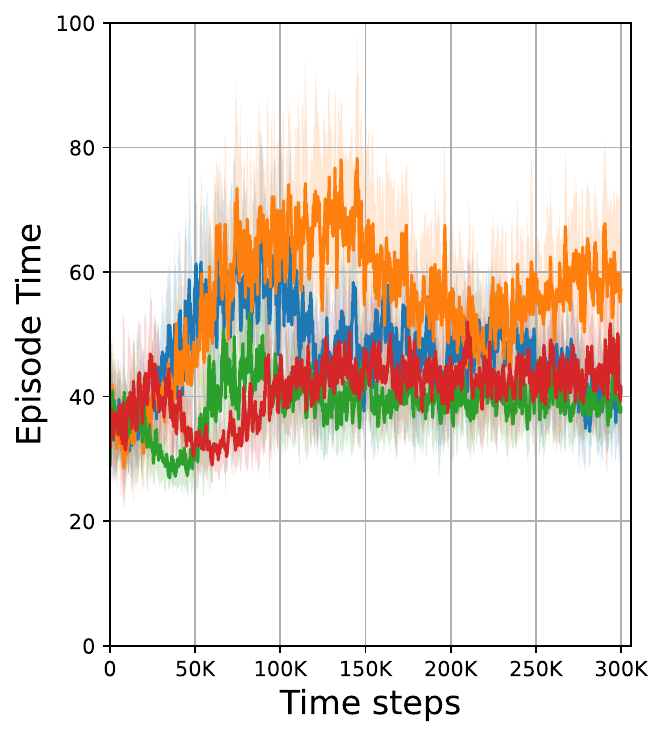}};
  \node[anchor=south west, inner sep=0] (A) at (0,0) {\includegraphics[width=.5\linewidth]{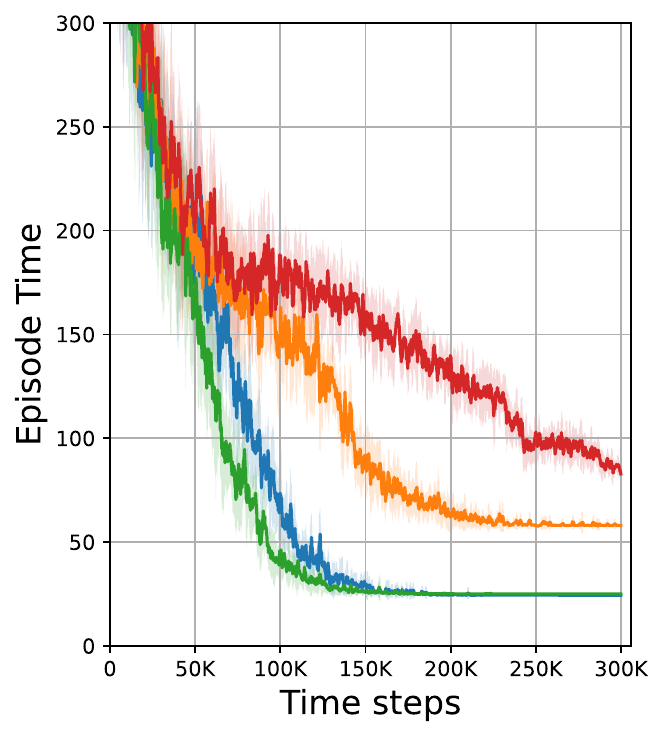}};
  \node[xshift=2.5mm] at ($(A.north)$) {\scriptsize Taxi Domain on TRM1};
  \node[xshift=1mm] at ($(B.north)$) {\scriptsize Frozen Lake on TRM2};
\end{tikzpicture}
\caption{Episode time comparison}
\label{fig:plot-crm_comparison-time}
\end{subfigure}
\caption{RQ1: Performance gain for counterfactual imagining for digital and real-time settings for two environments.}
\label{fig:plot-crm_comparison}
\end{figure*}

\begin{figure*}
\begin{subfigure}[b]{0.5\linewidth}
    \begin{tikzpicture}
  \node[anchor=south west, inner sep=0] (A) at (-1,0) {\includegraphics[width=.5\linewidth]{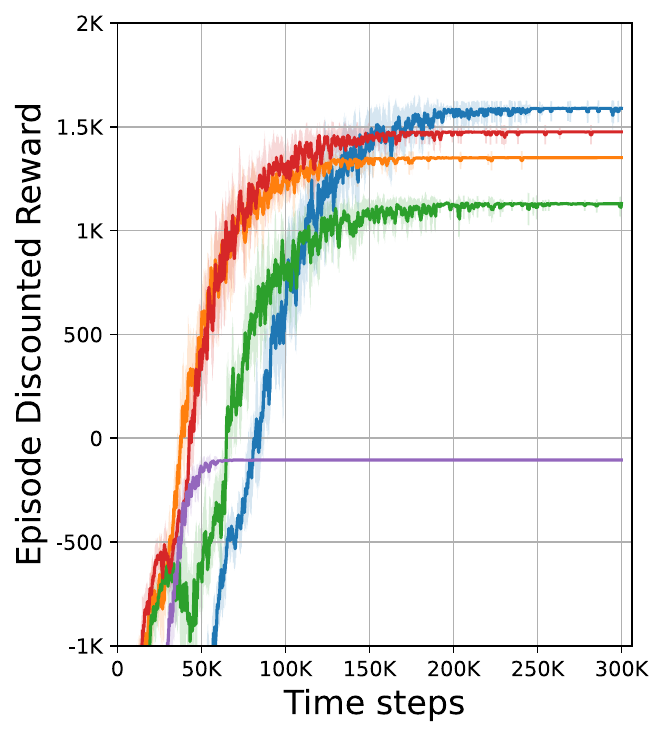}};
  \node[anchor=south west, inner sep=0] (B) at (3,0){\includegraphics[width=.5\linewidth]{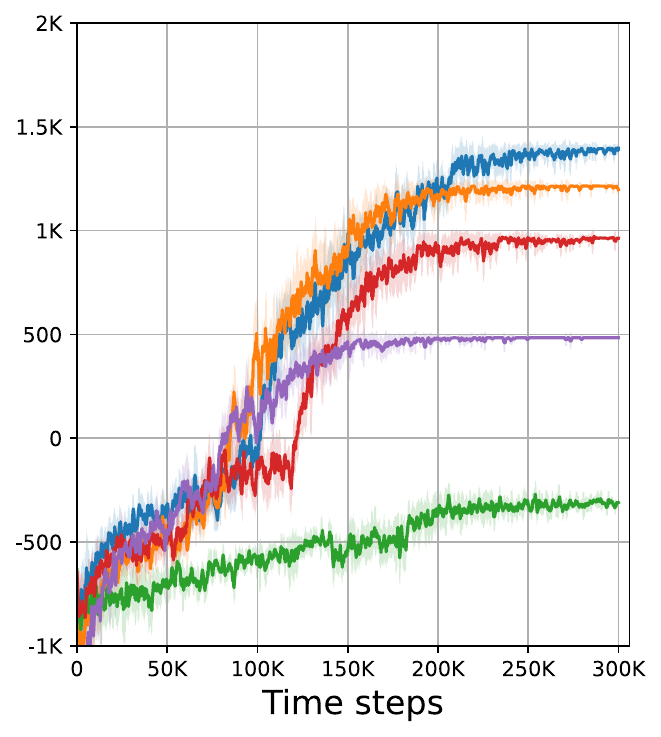}};
 \node[inner sep=0] (C) at (7.4,5){\includegraphics[width=2.1\linewidth]{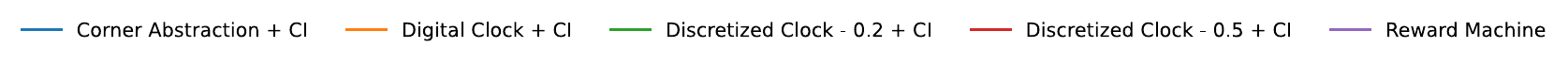}};
  \node[xshift=2.5mm] at ($(A.north)$) {\scriptsize Taxi Domain on TRM3};
  \node[xshift=1mm] at ($(B.north)$) {\scriptsize Frozen Lake on TRM4};
\end{tikzpicture}
\caption{Discounted reward comparison}
\label{fig:plot-abstraction_comparison-environments}
\end{subfigure}
\begin{subfigure}[b]{0.5\linewidth}
    \begin{tikzpicture}
  \node[anchor=south west, inner sep=0] (A) at (0,0) {\includegraphics[width=.5\linewidth]{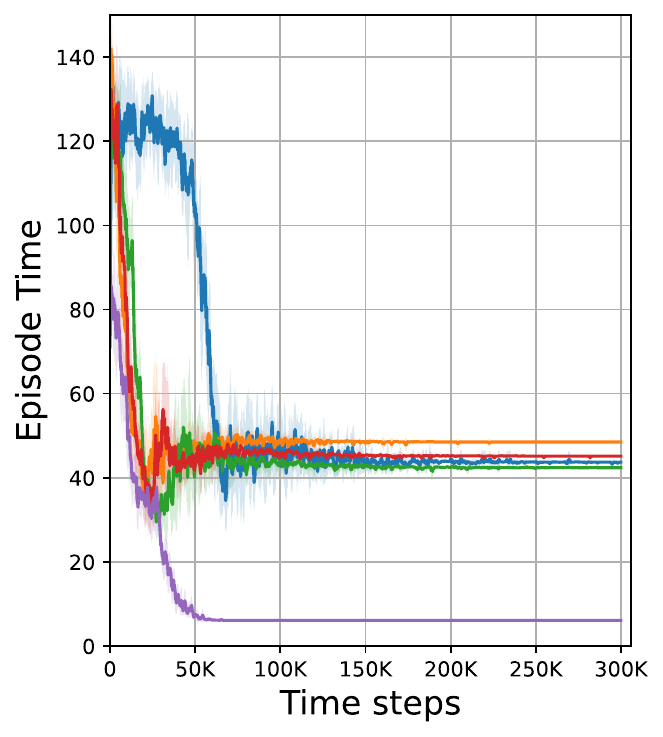}};
  \node[anchor=south west, inner sep=0] (B) at (4,0){\includegraphics[width=.5\linewidth]{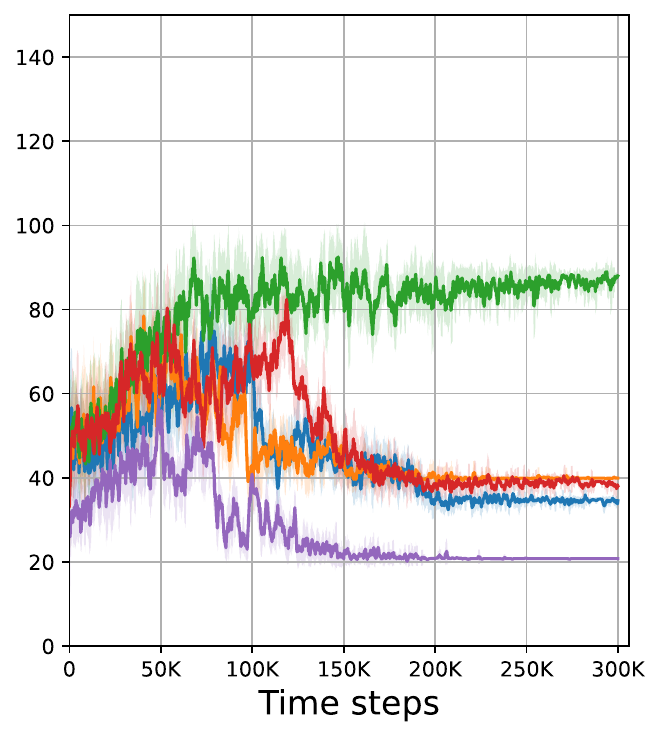}};
  \node[xshift=2.5mm] at ($(A.north)$) {\scriptsize Taxi Domain on TRM3};
  \node[xshift=1mm] at ($(B.north)$) {\scriptsize Frozen Lake on TRM4};
\end{tikzpicture}
\caption{Episode time comparison}
\label{fig:plot-abstraction_comparison-time}
\end{subfigure}
\caption{RQ2: Performance difference for various timed interpretations}
\label{fig:plot-abstraction_comparison}
\end{figure*}

\begin{figure*}[h]
\centering
\begin{subfigure}{\linewidth}
\centering

\begin{tikzpicture}[
  ->, >=stealth, on grid, auto, font=\footnotesize,
  node distance=4.2cm,
  state/.style={rectangle, rounded corners, draw, minimum width=12mm, minimum height=6mm},
  sink/.style={state, fill=gray!10},
  lab/.style={inner sep=1pt, align=center},
  every loop/.style={looseness=6},
  shorten >=1pt, shorten <=1pt
]
  \newcommand{\stlab}[2]{$u_{#1}$,\,#2}
  \newcommand{\edgelab}[4]{%
    \tiny
    \texttt{#1},\ %
    \if\relax\detokenize{#2}\relax $\top$ \else $#2$ \fi,\ %
    \if\relax\detokenize{#3}\relax $\varnothing$ \else $\{#3\}$ \fi,\ %
    #4%
  }

  \node[state, initial, initial text=] (u1) at (0,0)    {\stlab{1}{-1}};
  \node[state]          (u2) at (4.2,0)  {\stlab{2}{-1}};
  \node[state]          (u3) at (9,0)  {\stlab{3}{-1}};
  \node[state]         (u4) at (9,-1.6) {\stlab{4}{-1}};
  \node[sink]           (u0) at (4.2,-1.6) {\stlab{0}{-1}};

  \path
    (u1) edge node[lab, above, pos=0.48]{\edgelab{in\_taxi}{x>10}{x}{200}} (u2)
    (u1) edge[loop above] node[lab]{\edgelab{!in\_taxi}{}{}{-5}} (u1);

  \path
    (u2) edge node[lab, above, pos=0.52]{\edgelab{at\_green\&in\_taxi}{}{x}{400}} (u3)
    (u2) edge[loop above] node[lab]{\edgelab{!at\_green\&in\_taxi}{}{}{-5}} (u2)
    (u2) edge node[lab, left, pos=0.5]{\edgelab{!in\_taxi}{}{}{-5}} (u0);

  \path
    (u3) edge node[lab, right, pos=0.48,align=center]{\tiny\texttt{at\_dest},\\[-1mm] \tiny$x\le 15$,\\[-1mm] \tiny$\varnothing$, \tiny600} (u4)
    (u3) edge[loop above] node[lab]{\edgelab{!at\_dest\&in\_taxi}{}{}{-5}} (u3)
    (u3) edge node[lab, above, pos=0.55, sloped]{\edgelab{!at\_dest\&!in\_taxi}{}{}{-5}} (u0);

  \path
    (u4) edge node[lab, align=center]
  { \edgelab{drop\_off}{}{}{800} \\[-1mm] \edgelab{!drop\_off}{}{}{-5} }
(u0);
\end{tikzpicture}
\caption{TRM1 for Taxi in RQ1}\label{fig:trm1}
\vspace{4mm}
\end{subfigure}

\begin{subfigure}{\linewidth}
\centering

{
\begin{tikzpicture}[
  ->, >=stealth, on grid, auto, font=\footnotesize,
  every node/.style={transform shape},
  node distance=4.2cm,
  state/.style={rectangle, rounded corners, draw, minimum width=12mm, minimum height=4.5mm},
  sink/.style={state, fill=gray!10},
  lab/.style={inner sep=1pt, align=center},
  every loop/.style={looseness=6},
  shorten >=1pt, shorten <=1pt
]
  \newcommand{\stlab}[2]{$u_{#1}$,\,#2}
  \newcommand{\edgelab}[4]{%
    \tiny
    \texttt{#1},\ %
    \if\relax\detokenize{#2}\relax $\top$ \else $#2$ \fi,\ %
    \if\relax\detokenize{#3}\relax $\varnothing$ \else $\{#3\}$ \fi,\ %
    #4%
  }

  \node[state, initial, initial text=] (u1) at (0,0)       {\stlab{1}{-20}};
  \node[state]          (u2) at (4.2,0)     {\stlab{2}{-20}};
  \node[state]          (u3) at (8.4,0)     {\stlab{3}{-20}};
  \node[sink]           (u0) at (4.2,-1.6)  {\stlab{0}{-200}};

  \path
    (u1) edge node[lab, above, pos=0.5]{\edgelab{a}{x\le 12}{x,y}{200}} (u2)
    (u1) edge[loop below] node[lab]{\edgelab{b $\mid$ c}{}{}{-10}} (u1)
    (u1) edge[loop above] node[lab]{\edgelab{\{\}}{y>1}{y}{-5}\\[-1mm]\edgelab{\{\}}{y\le 1}{y}{-50}} (u1)
    (u1) edge node[lab, above,sloped]{\edgelab{h}{}{}{-200}} (u0);

  \path
    (u2) edge node[lab, above, pos=0.5]{\edgelab{b}{x\le 15}{x,y}{600}} (u3)
    (u2) edge[loop above] node[lab]{\edgelab{\{\}}{y>1}{y}{-5}\\[-1mm]\edgelab{\{\}}{y\le 1}{y}{-50}\\[-1mm]\edgelab{a $\mid$ c}{}{}{-10}} (u2)
    (u2) edge node[lab, right]{\edgelab{h}{}{}{-200}} (u0);

  \path
    (u3) edge node[lab, below,pos = 0.4,yshift =-11pt]{\edgelab{c}{x\le 10}{}{800}\\[-1mm]\edgelab{h}{}{}{-200}} (u0)
    (u3) edge[loop below] node[lab]{\edgelab{a}{}{}{-10}\\[-1mm]\edgelab{b}{}{}{-20}} (u3)
    (u3) edge[loop above] node[lab]{\edgelab{\{\}}{y>1}{y}{-5}\\[-1mm]\edgelab{\{\}}{y\le 1}{y}{-50}} (u3);
\end{tikzpicture}}
\caption{TRM2 for FrozenLake in RQ1}\label{fig:trm2}
\vspace{4mm}
\end{subfigure}

\begin{subfigure}{\linewidth}
\centering
\begin{tikzpicture}[
  ->, >=stealth, on grid, auto, font=\footnotesize,
  every node/.style={transform shape},
  node distance=4.2cm,
  state/.style={rectangle, rounded corners, draw, minimum width=12mm, minimum height=4.5mm},
  sink/.style={state, fill=gray!10},
  lab/.style={inner sep=1pt, align=center},
  every loop/.style={looseness=6},
  shorten >=1pt, shorten <=1pt
]
  \newcommand{\stlab}[2]{$u_{#1}$,\,#2}
  \newcommand{\edgelab}[4]{%
    \tiny
    \texttt{#1},\ %
    \if\relax\detokenize{#2}\relax $\top$ \else $#2$ \fi,\ %
    \if\relax\detokenize{#3}\relax $\varnothing$ \else $\{#3\}$ \fi,\ %
    #4%
  }

  \node[state, initial, initial text=] (u1) at (0,0)    {\stlab{1}{-20}};
  \node[state]          (u2) at (4.8,0)  {\stlab{2}{-20}};
  \node[state]          (u3) at (9.8,0)  {\stlab{3}{-20}};
  \node[state]         (u4) at (9.8,-1.6) {\stlab{4}{-20}};
  \node[sink]           (u0) at (4.8,-1.6) {\stlab{0}{-20}};

  \path
    (u1) edge node[lab, above, pos=0.48]{\edgelab{in\_taxi}{x\le 14}{x,y}{200}} (u2)
    (u1) edge[loop above] node[lab,align=center]{\edgelab{!in\_taxi}{y>1}{y}{-5}\\[-1mm]\edgelab{!in\_taxi}{y\le 1}{y}{-50}} (u1);

  \path
    (u2) edge node[lab, above, pos=0.52]{\edgelab{at\_green\&in\_taxi}{}{x,y}{400}} (u3)
    (u2) edge[loop above,align=center] node[lab]{\edgelab{!at\_green\&in\_taxi}{y>1}{y}{-5}\\[-1mm] \edgelab{!at\_green\&in\_taxi}{y\le 1}{y}{-50}} (u2)
    (u2) edge node[lab, left]{\edgelab{!in\_taxi}{}{}{ -100}} (u0);

  \path
    (u3) edge node[lab, right, pos=0.48,align=center]{\tiny\texttt{at\_dest},\\[-1mm] \tiny \texttt{$x\le 15$},\\[-1mm] \tiny$\varnothing$, \tiny 600} (u4)
    (u3) edge[loop above] node[lab,align=center]{\edgelab{!at\_dest\&in\_taxi}{y>1}{}{-5}\\[-1mm]\edgelab{!at\_dest\&in\_taxi}{y\le 1}{}{-50}} (u3)
    (u3) edge node[lab, above, sloped,]{\edgelab{!at\_dest\&!in\_taxi}{}{}{-100}} (u0);

  \path
    (u4) edge node[lab, align=center]
  { \edgelab{drop\_off}{}{}{800} \\[-1mm] \edgelab{!drop\_off}{}{}{-5} }(u0);
\end{tikzpicture}
\caption{TRM3 for Taxi in RQ2}\label{fig:trm3}
\vspace{4mm}
\end{subfigure}

\begin{subfigure}{\linewidth}
\centering
\begin{tikzpicture}[
  ->, >=stealth, on grid, auto, font=\footnotesize,
  node distance=4.2cm,
  state/.style={rectangle, rounded corners, draw, minimum width=12mm, minimum height=6mm},
  sink/.style={state, fill=gray!10},
  lab/.style={inner sep=1pt, align=center},
  every loop/.style={looseness=6},
  shorten >=1pt, shorten <=1pt
]
  \newcommand{\stlab}[2]{$u_{#1}$,\,#2}
  \newcommand{\edgelab}[4]{%
  \tiny
    \texttt{#1},\ %
    \if\relax\detokenize{#2}\relax $\top$ \else $#2$ \fi,\ %
    \if\relax\detokenize{#3}\relax $\varnothing$ \else $\{#3\}$ \fi,\ %
    #4%
  }

  \node[state, initial, initial text=] (u1) at (0,0)      {\stlab{1}{-20}};
  \node[state]          (u2) at (4.5,0)    {\stlab{2}{-20}};
  \node[state]          (u3) at (9,0)    {\stlab{3}{-20}};
  \node[sink]           (u0) at (4.5,-1.6) {\stlab{0}{-200}};

  \path
    (u1) edge node[lab, above, pos=0.5]{\edgelab{a}{}{x}{200}} (u2)
    (u1) edge[loop below] node[lab, align=center]
         {\edgelab{b $\mid$ c}{}{}{-10}} (u1)
    (u1) edge[loop above] node[lab, align=center]
         {\edgelab{\{\}}{x>1}{x}{-5} \\[-1mm] \edgelab{\{\}}{x\le 1}{x}{-50}} (u1)
    (u1) edge node[lab, above,sloped]{\edgelab{h}{}{}{-200}} (u0);

  \path
    (u2) edge node[lab, above, pos=0.5]{\edgelab{b}{}{x}{600}} (u3)
    (u2) edge[loop above] node[lab, align=center]
     {\edgelab{\{\}}{x>0}{x}{-5} \\[-1mm] \edgelab{\{\}}{x\le 0}{x}{-50}\\[-1mm] \edgelab{a $\mid$ c}{}{}{-10}} (u2)
    (u2) edge node[lab, right]{\edgelab{h}{}{}{-200}} (u0);

  \path
    (u3) edge node[lab, below,pos = 0.4,yshift =-11pt]
         {\edgelab{c}{}{}{800} \\[-1mm] \edgelab{h}{}{}{-200}} (u0)
    (u3) edge[loop below] node[lab, align=center]
         {\edgelab{a}{}{}{-10} \\[-1mm] \edgelab{b}{}{}{-20}} (u3)
    (u3) edge[loop above] node[lab, align=center]
         {\edgelab{\{\}}{x>1}{x}{-5} \\[-1mm] \edgelab{\{\}}{x\le 1}{x}{-50}} (u3);
\end{tikzpicture}
\caption{TRM4 for FrozenLake in RQ2}\label{fig:trm4}
\end{subfigure}
\caption{TRMs for Experimental Evaluations}
\label{fig:TRMs}
\end{figure*}
\FloatBarrier

\begin{figure}[h]
\centering
{%
\begin{tikzpicture}[
  ->, >=stealth, on grid, font=\scriptsize,
  node distance=7cm,
  state/.style={rectangle, rounded corners, draw, minimum width=12mm, minimum height=6mm},
  sink/.style={state, fill=gray!10},
  lab/.style={inner sep=1pt},
  every loop/.style={looseness=7, min distance=8mm},
  shorten >=1pt, shorten <=1pt
]
  \newcommand{\stlab}[2]{$u_{#1}$,\,#2}
  \newcommand{\edgelab}[4]{%
    \texttt{#1},\ %
    \if\relax\detokenize{#2}\relax $\top$ \else $#2$ \fi,\ %
    \if\relax\detokenize{#3}\relax $\varnothing$ \else $\{#3\}$ \fi,\ %
    #4%
  }

  \node[state, initial, initial text=] (u1) at (0,0)      {\stlab{1}{-20}};
  \node[state]                          (u2) at (4.3,0)    {\stlab{2}{-20}};
  \node[state]                          (u3) at (0,-2.4)    {\stlab{3}{-20}};
  \node[sink]                           (u0) at (4.3,-2.4) {$u_0$};

  \path
    (u1) edge node[lab, above, pos=0.52]{\edgelab{a}{x\le 5}{x,y}{1000}} (u2)
    (u1) edge node[lab, left, pos=0.5]{\edgelab{b}{}{x,y}{800}} (u3)
    (u1) edge node[lab, right, pos=0.5]{\edgelab{h}{}{}{-800}} (u0)
    (u1) edge[loop above] node[lab]{\edgelab{!a\&!b\&!h}{y>1}{y}{-10}} (u1);

  \path
    (u2) edge node[lab, right, pos=0.55,align=center]{\edgelab{b}{}{x,y}{1000}\\\edgelab{h}{}{}{-800}} (u0)
    (u2) edge[loop above] node[lab,align=center]{\edgelab{!a\&!b\&!h}{}{}{-10}\\ \edgelab{a}{}{}{-10}} (u2);

  \path
    (u3) edge node[lab, above, pos=0.5]{\edgelab{a}{}{}{800}} node[lab, below, pos=0.5]{\edgelab{h}{}{}{-800}} (u0)
    (u3) edge[loop below] node[lab, align=center]{\edgelab{b}{}{}{-10}\\ \edgelab{!a\&!b\&!h}{y>1}{y}{-10}} (u3);
\end{tikzpicture}
}
\caption{TRM5 example on Frozen Lake}
\label{TRM:cont-is-better}
\end{figure}

\begin{figure}[h]
\centering
\scalebox{0.6}{%
\begin{minipage}{\linewidth}
\centering
\includegraphics[width=.9\linewidth]{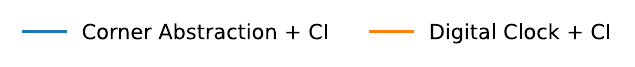}\\[.6ex]
\includegraphics[width=.49\linewidth]{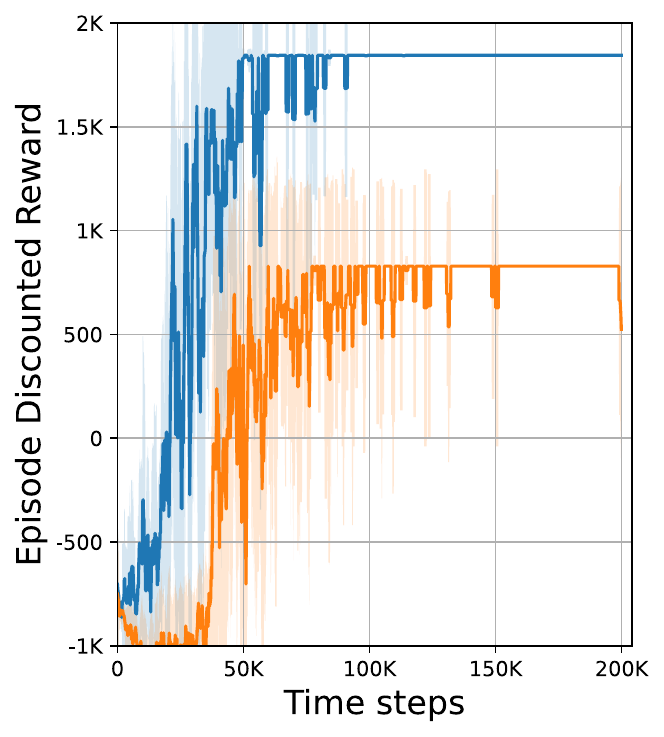}%
\hfill
\includegraphics[width=.49\linewidth]{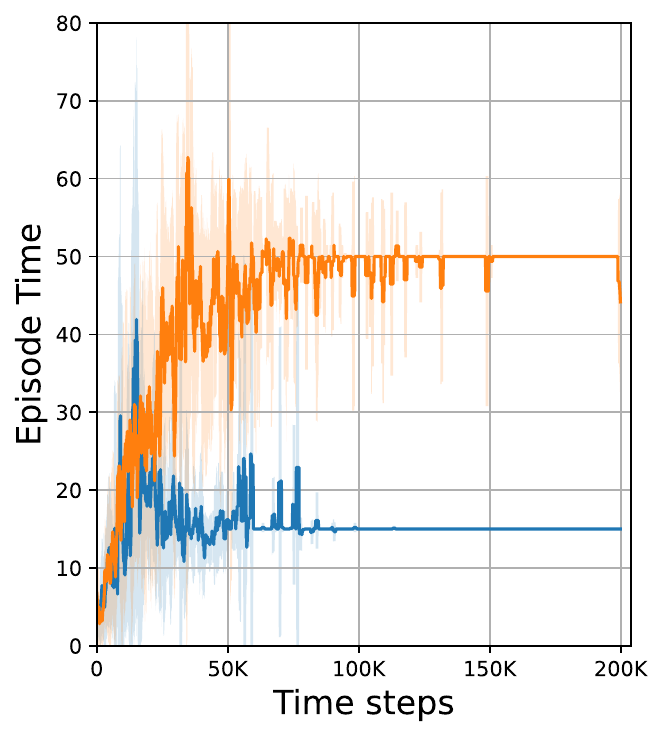}
\end{minipage}%
}
\caption{Pronounced performance difference between corner-point abstraction and digital clock abstraction for a TRM specification (TRM5, shown above) that requires precise timing.}
\label{plot:cont-is-better}
\end{figure}

\subsection*{RQ3: Scale of experiments}

\begin{table*}[h]
\centering
\footnotesize
\setlength{\tabcolsep}{3.8pt}
\renewcommand{\arraystretch}{1.18}

\begin{tabular}{l c c c c c c c}
\toprule
\textbf{Setting} &
\textbf{MDP (state-size)} &
\textbf{TRM (state-size)} &
\textbf{Product Size} &
\textbf{Explored States} &
\textbf{Learning Time (s)} \\
\midrule

\multirow{4}{*}{Digital-clock}  & Taxi (500) & TRM1 (6) & 51{,}000      & 1{,}050 & 184.13 \\
                               &  Taxi (500) & TRM3 (5) & 127{,}500     & 910     & 50.14  \\
  & Frozen Lake (64) & TRM2 (4) & 13{,}056      & 1{,}693 & 40.32 \\                                 
&  Frozen Lake (64) & TRM4 (4) & 768          & 174     & 35.23 \\

\midrule

\multirow{4}{*}{Real-time}      & Taxi (500) & TRM1 (6) & 1{,}402{,}500 & 3{,}148 & 356.02 \\
                                 &  Taxi (500) & TRM3 (5) & 23{,}358{,}000& 2{,}990 & 84.21  \\
    & Frozen Lake (64) & TRM2 (4) &  2{,}242{,}368 & 5{,}763 & 84.53 \\
                                &  Frozen Lake (64) & TRM4 (4) &  2{,}880      & 247     & 45.23 \\
\bottomrule
\end{tabular}

\caption{Training statistics for the experiments.}
\label{tab:scalability-stats}
\end{table*}

We report in Table~\ref{tab:scalability-stats} the scale of our experiments in terms of the size of the underlying MDP, the size of the TRM, the size of the induced cross-product MDP, and the number of explored states during learning and the total learning time for each experiment.
For each environment and TRM specification, we report the size of the underlying MDP, the size of the corresponding TRM, and the size of the induced cross-product MDP, together with the total learning time.
In the digital-clock setting, the cross-product size accounts for all possible discrete clock valuations, whereas in the real-time setting, the size reflects the incorporation of corner-region abstraction.
We also report the number of explored states encountered during learning.

First, these statistics indicate that the scale of our experiments is on par with the contemporary works on RL with finite-state machines, e.g.,~\cite{DBLP:conf/ijcai/FalahG025,DBLP:conf/ecai/HahnPSS0W23,DBLP:conf/aaai/CorazzaGN22}.

Further, we note that the number of explored states is orders of magnitude smaller than the size of the corresponding cross-product MDPs.
This gap arises because a large fraction of product states are unreachable from the initial state under any feasible policy.
These results highlight a key advantage of our sampling-based reinforcement learning framework: effective policy learning without exhaustive exploration of the full (timed) product state space.
In contrast, many deductive planning and control approaches for timed automata typically require explicit construction of the product system or carefully designed heuristics to ensure sufficient coverage of the entire state space. The consistently moderate learning times across the experiments further support the practical scalability of the proposed framework.

\section{Technical discussion with selected related works}
\label{sec:technical-comparison-related-works}
We expand upon the technical comparison with key related works by providing more details.
Also, we clarify which related works can be fairly compared against our setting.

\subsection{
Comparison with works in Priced Timed Automata (PTAs)}
\begin{itemize}
\item These models have been primarily used in control and planning~\cite{DBLP:conf/cav/BehrmannCDFLL07,DBLP:conf/hybrid/BouyerBL04,DBLP:journals/fmsd/BouyerBL08,DBLP:conf/fsttcs/BouyerCFL04}, where knowledge of the underlying model is assumed. Consequently, these works develop deductive methods for using such specifications, as opposed to the sampling-based statistical RL methods in a model-free setting. Learning in a model-free setting has the advantage of not storing the entire model of the MDP and the TRM. This advantage can be seen from the scalability table, Table~\ref{tab:scalability-stats}, where the explored space is significantly less than the possible cross-product space.
So, no comparable PTA baseline exists in the model-free setting.

\item The reward computation for PTAs is usually based on cumulative sums~\cite{DBLP:conf/fsttcs/BouyerCFL04} or ratios~\cite{DBLP:journals/fmsd/BouyerBL08}. In contrast, TRMs consider discounted sums, which are standard in RL and require different analysis. Moreover, TRMs can incorporate Markovian reward functions on states and transitions (e.g., cost c in Figure~\ref{fig:small_example}).
\end{itemize}

\subsection{Comparison with works in Reward Machines (RMs)}
\begin{itemize}
    \item TRMs add expressive power to RMs~\cite{DBLP:phd/ca/Icarte22} in the standard RL setting by adding clock constraints, with RMs being a strict subclass of TRMs. Consequently, TRMs also have an empirical advantage over RMs in handling time-sensitive requirements, as shown by our experiments.
    \item Our work differs from RMs interpreted on CTMDPs~\cite{DBLP:conf/ijcai/FalahG025} as these involve specifications with no timing constraints (e.g., guards or interval bounds). Thus, our novelty of adding clocks to RM with suitable constraints requires techniques from the TA literature, such as region/zone abstractions, which are not required in~\cite{DBLP:conf/ijcai/FalahG025}. 
\end{itemize}

\subsection{Comparison with works in Duration Calculus (DC)}
\begin{itemize}
\item Duration Calculus~\cite{DBLP:conf/rtss/DoleGKKT21} falls under declarative logical formalisms (e.g., LTL, MTL), whereas our work is based on reward-based automata formalisms (numerous RM formalisms). These two classes of formalisms serve different purposes as specifications in RL. While declarative formalisms are closer to natural language, they do not offer the same fine-grained control over costs and rewards as reward-based automata do.
\item Methodologically, RL with declarative formalisms~\cite{DBLP:conf/icra/Bozkurt0ZP20,DBLP:conf/rtss/DoleGKKT21,DBLP:conf/cdc/HasanbeigKAKPL19} focus on effectively converting the logical specifications to automata monitors (where reward 1 denotes accept, 0 reject) that track the satisfaction of the specification. One would then use RL techniques developed for RM~\cite{DBLP:conf/ijcai/ShaoK23} itself for the monitors. In this sense, our techniques for TRM improve RL for timed specifications by leveraging various abstractions and counterfactuals.

\item RL using automata monitors is known to suffer from the inability to learn long-horizon temporally extended tasks. We empirically demonstrate this in a simple scenario where learning with an automaton monitor fails to satisfy the necessary guards to complete a long-horizon task, whereas our TRM does.

We consider a task specification in the taxi domain which requires the taxi to (i) pick up the passenger after at least 12 time units, (ii) then visit the green zone within the next 15 time units, and (iii) then drop the passenger at the destination within the next 15 time units.
We first encode this specification as a timed automaton monitor that assigns a reward of~1 only upon reaching the accepting state corresponding to successful completion of all three timing constraints, and a reward of~0 otherwise.
This represents a purely declarative specification with sparse terminal rewards.

In contrast, we construct an equivalent timed reward machine in which intermediate rewards are provided upon progress toward satisfying individual timing constraints.
Such intermediate rewards are naturally supported in the TRM formalism used in our framework and allow the agent to receive incremental feedback during learning.

The TRM representing the timed behavior described above is presented in Fig.~\ref{trm:comparison_with_DC}. Also note that, one can simply modify this TRM to get the corresponding timed automaton monitor: replace the final reward (on the transition from $u_4$ to $u_0$) with $1$ and everything else with $0$.

\begin{figure*}[h]
\centering
\begin{tikzpicture}[
  ->, >=stealth, on grid, auto, font=\footnotesize,
  node distance=4.2cm,
  state/.style={rectangle, rounded corners, draw,
                minimum width=12mm, minimum height=6mm},
  finalstate/.style={state, double, double distance=1pt},
  sink/.style={state, fill=gray!10},
  lab/.style={inner sep=1pt, align=center},
  every loop/.style={looseness=6},
  shorten >=1pt, shorten <=1pt
]
  \newcommand{\stlab}[2]{$u_{#1}$,\,#2}
  \newcommand{\edgelab}[4]{%
    \tiny
    \texttt{#1},\ %
    \if\relax\detokenize{#2}\relax $\top$ \else $#2$ \fi,\ %
    \if\relax\detokenize{#3}\relax $\varnothing$ \else $\{#3\}$ \fi,\ %
    #4%
  }

  \node[state, initial, initial text=] (u1) at (0,0)    {\stlab{1}{-20}};
  \node[state]                         (u2) at (4.2,0)  {\stlab{2}{-20}};
  \node[state]                         (u3) at (9.5,0)    {\stlab{3}{-20}};
  \node[state]                    (u4) at (9.5,-1.6) {\stlab{4}{-20}};
  \node[sink]                          (u0) at (4.2,-1.6) {\stlab{0}{-20}};

  \path
    (u1) edge node[lab, above, pos=0.48]
      {\edgelab{in\_taxi}{x>12}{x}{200}} (u2)
    (u1) edge[loop above] node[lab]
      {\edgelab{!in\_taxi}{}{}{-5}} (u1);

  \path
    (u2) edge node[lab, above, pos=0.52]
      {\edgelab{at\_green\&in\_taxi}{x \le 15}{x}{400}} (u3)
    (u2) edge[loop above] node[lab]
      {\edgelab{!at\_green\&in\_taxi}{}{}{-5}} (u2)
    (u2) edge node[lab, left, pos=0.5]
      {\edgelab{!in\_taxi}{}{}{-5}} (u0);

  \path
    (u3) edge node[lab, right, pos=0.48, align=center]
      {\tiny\texttt{at\_dest},\\[-1mm]
       \tiny$x\le 15$,\\[-1mm]
       \tiny$\varnothing$, \tiny600} (u4)
    (u3) edge[loop above] node[lab]
      {\edgelab{!at\_dest\&in\_taxi}{}{}{-5}} (u3)
    (u3) edge node[lab, above, pos=0.55, sloped]
      {\edgelab{!at\_dest\&!in\_taxi}{}{}{-5}} (u0);

  \path
    (u4) edge node[lab, align=center]
      { \edgelab{drop\_off}{}{}{800}\\[-1mm]
        \edgelab{!drop\_off}{}{}{-5} }
      (u0);
\end{tikzpicture}
\caption{TRM6 for comparison between TRMs with declarative models}
\label{trm:comparison_with_DC}
\end{figure*}

Our results show that, under the declarative timed automaton monitor, the agent fails to reliably learn policies that satisfy the full timed specification.
In contrast, when guided by the TRM-based reward structure, the agent progressively learns to satisfy each timing constraint during training and ultimately succeeds in completing the entire task.
The number of guards that each of these models satisfy is shown in Fig.~\ref{fig:guards-satisfaction}.
This demonstrates that, in timed settings, TRMs provide a more effective learning signal than purely declarative models, leading to improved learnability without altering the underlying specification.

\begin{figure}[h]
\centering
\includegraphics[width=0.5\linewidth]{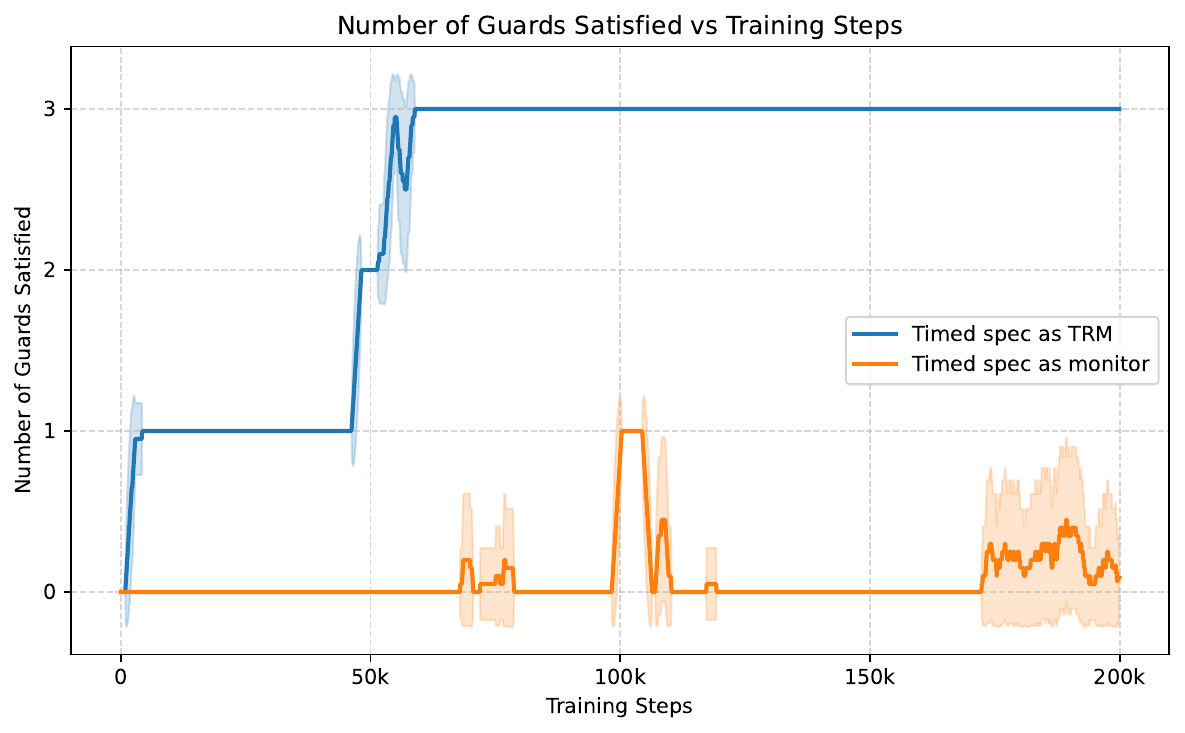}
\caption{Guard satisfaction over training steps in the Taxi environment.}
\label{fig:guards-satisfaction}
\end{figure}

\item Since the abilities, purpose, and features of declarative formalisms (e.g., LTL, DC) are substantially different from the reward-based formalisms (RMs, omega-regular RMs), it is uncommon in this literature to compare these different classes (e.g., see~\cite{DBLP:conf/ecai/HahnPSS0W23,IcarteKVM18}).
\end{itemize}

\section{Conclusion}
We studied model-free RL for \emph{Timed Reward Machines} (TRMs), a formalism that extends reward machines with explicit timing constraints. We interpreted TRMs over MDPs under digital and real-time semantics and devised abstractions for efficient learning. Our experiments with non-trivial timed specifications show that TRMs enable learning policies with delays for maximizing rewards.

This work represents a step toward improving time-sensitive reward specification in RL, with numerous avenues ahead. One can apply TRMs to continuous-time Markov models~\cite{DBLP:conf/ijcai/FalahG025}, which better capture rate-based timing; adapt deep continuous RL (e.g., TD3~\cite{DBLP:conf/icml/FujimotoHM18}) to continuous-time; and incorporate guidance from priced zones~\cite{BehrmannLR04} to improve exploration of TRM objectives.

\paragraph{Acknowledgements.}
Rajarshi Roy, David Parker and Marta Kwiatkowska received funding from the ERC under the European Union’s Horizon 2020 research and innovation programme (grant agreement No.834115, FUN2MODEL).
Rajarshi Roy was partially funded
by the European Union (RobustifAI project, ID 101212818).
Views and opinions expressed are however those of the
author(s) only and do not necessarily reflect those of the
European Union or the European Health and Digital Executive Agency (HADEA). Neither the European Union nor
the granting authority can be held responsible for them.
Anirban Majumdar was supported by the Department of Atomic Energy, Government of
India, under project no. RTI4014.
We also thank Anne-Kathrin Schmuck for insightful discussions.

\bibliographystyle{splncs04} 
\bibliography{bib}


\end{document}